\newtheorem{proposition}{Proposition}
\newtheorem{theorem}{Theorem}
\newtheorem{lemma}{Lemma}
\newtheorem{assumption}{Assumption}
\icmltitlerunning{FastCache: Fast Caching for Diffusion Transformer}
\begin{document}

\twocolumn[
  \icmltitle{FastCache: Fast Caching for Diffusion Transformer Through Learnable Linear Approximation}


  \begin{icmlauthorlist}
    \icmlauthor{Dong Liu$^{*}$}{aff1,aff3,aff4}
    \icmlauthor{Yanxuan Yu}{aff2}
    \icmlauthor{Jiayi Zhang}{aff4}
    \icmlauthor{Yifan Li}{aff5}
    \icmlauthor{Ben Lengerich}{aff4}
    \icmlauthor{Ying Nian Wu}{aff3}
  \end{icmlauthorlist}

  \icmlaffiliation{aff1}{Yale University}
  \icmlaffiliation{aff2}{Columbia University}
  \icmlaffiliation{aff3}{University of California, Los Angeles}
  \icmlaffiliation{aff4}{University of Wisconsin--Madison}
  \icmlaffiliation{aff5}{Michigan State University}

  \icmlcorrespondingauthor{Dong Liu}{pikeliu@ucla.edu}

  \icmlkeywords{Caching, Diffusion Transformers, Inference Acceleration, Token Reduction}

  \vskip 0.3in
]

\printAffiliationsAndNotice{}

\begin{abstract}
Diffusion Transformers (DiT) are powerful generative models but remain computationally intensive due to their iterative structure and deep transformer stacks. 
To alleviate this inefficiency, we propose \textit{\textbf{FastCache}}, a hidden-state-level caching and compression framework that accelerates DiT inference by exploiting redundancy within the model’s internal representations. 
\textit{FastCache} introduces a dual strategy: (1) a spatial-aware token selection mechanism that adaptively filters redundant tokens based on hidden state saliency, and (2) a transformer-level cache that reuses latent activations across timesteps when changes are statistically insignificant. 
These modules work jointly to reduce unnecessary computation while preserving generation fidelity through learnable linear approximation.
Theoretical analysis shows that \textit{FastCache} maintains bounded approximation error under a hypothesis-testing-based decision rule. 
Empirical evaluations across multiple DiT variants demonstrate substantial reductions in latency and memory usage, with best generation output quality compared to other cache methods, as measured by FID and t-FID. To further improve the speedup performance of FastCache, we also introduce a token merging module that merge redundant tokens based on kNN density. 
Code is available at \href{https://github.com/NoakLiu/FastCache-xDiT}{https://github.com/NoakLiu/FastCache-xDiT}.
\end{abstract}

\section{Introduction}
Transformer-based diffusion models have emerged as powerful generative frameworks for image and video synthesis. Among them, Diffusion Transformers (DiT)~\cite{dit} leverage hierarchical attention mechanisms and deep transformer stacks to achieve high-fidelity generation results.
However, the reliance on multi-level attention structures and extensive transformer depth introduces considerable computational overhead. In particular, repeated transformer operations across timesteps and spatial locations substantially increase inference cost, especially in scenarios where hidden states exhibit minimal variation. This significantly constrains the practicality of DiT in real-time applications such as video editing, virtual reality, and large-scale content generation.

Despite progress in accelerating diffusion inference via architectural modifications or token pruning, such as ~\cite{tomesd, tokenfusion, pab, adacache}, current methods still struggle to balance efficiency and generation quality.
Most notably, existing approaches focus on the frame or token level , treating individual frames or spatial patches as the unit of compression.
Yet, diffusion generation is inherently redundant at the \emph{representation level}, as the hidden states across successive timesteps often exhibit minimal changes, especially in low-motion regions or later denoising steps.
Ignoring such redundancy leads to unnecessary recomputation and inflated resource usage.


\textbf{How can we exploit redundancy within hidden states to accelerate inference?}
While prior works such as AdaCache~\cite{adacache} and DeepCache~\cite{deepcache} attempt to reuse features across timesteps, they focus on low-level U-Net structures or specific attention layers.
In contrast, we explore caching from a \textit{transformer-level} perspective in DiT, targeting hidden states as the fundamental unit of reuse.
This perspective enables finer-grained and model-agnostic acceleration strategies, which can scale across different DiT configurations.

To this end, we propose \textbf{\emph{FastCache}}, a spatial-temporal caching framework that operates directly on transformer hidden states to accelerate DiT inference. 
\textit{FastCache} comprises two core components:
(1) A hidden-state caching mechanism that dynamically decides whether to reuse or recompute each timestep's representation based on statistical similarity tests; 
(2) A saliency-aware token reduction module that selectively prunes redundant spatial tokens based on motion and content variations.
Together, these modules adaptively compress computation in both the temporal and spatial dimensions without degrading generation quality.

The decision to reuse cached states is guided by a rigorous statistical hypothesis test on hidden-state changes, allowing \textit{\textbf{FastCache}} to balance computational savings and error tolerance. This design is further supported by theoretical analysis showing that the cache error is tightly bounded under mild distributional assumptions.
\begin{figure}
    \centering
    \includegraphics[width=1\linewidth]{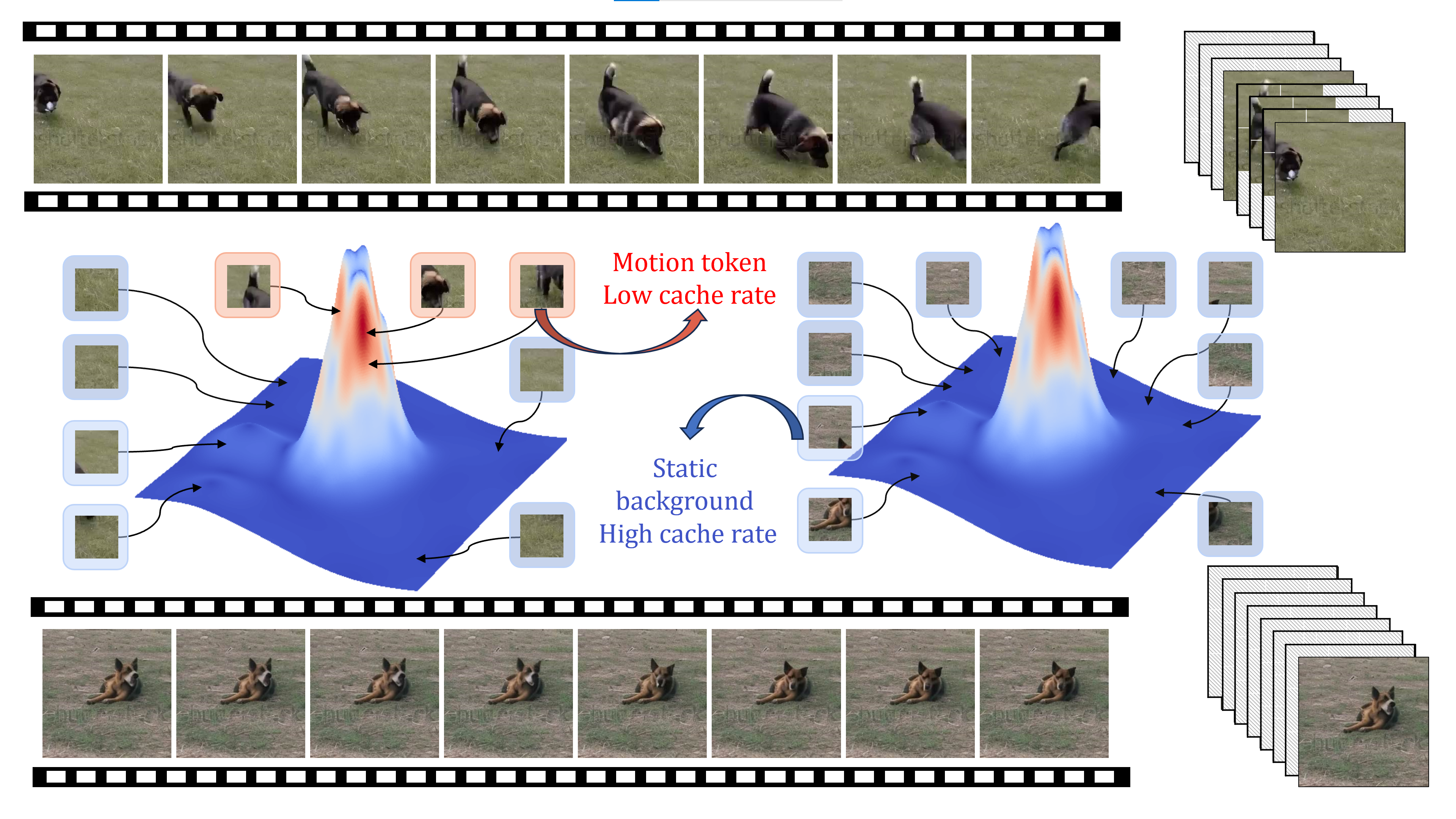}
    \caption{
    \textbf{Top:} A video exhibiting substantial motion, with the corresponding masked input displayed to the right. The masked regions indicate areas excluded from DiT processing. 
    \textbf{Middle:} An illustration of the hidden-state interaction space, where the colormap visualizes the magnitude of first-order derivatives. Warmer colors (e.g., red) represent higher absolute gradient values, while cooler colors (e.g., blue) indicate lower values. 
    \textbf{Bottom:} A comparatively static video, with its masked input shown on the right. 
    \textbf{Interpretation:} Videos characterized by greater frame-to-frame variation, such as the top example, contain a higher proportion of motion-related tokens. These tokens are associated with the red regions in the heatmap and tend to exhibit a lower cache utilization rate (\textbf{motion region needs recomputation}), these tokens are associated with the blue regions in the heatmap and tend to exhibit a higher cache utilization rate (\textbf{static background will be cached by \textit{FastCache} for acceleration})
    }
    \label{fig:overview}
\end{figure}
\vspace{0.5em}
\noindent \textbf{Our contributions are summarized as follows:}
\begin{itemize}
    \item We introduce \textit{FastCache}, a \textbf{hidden-state-level caching and compression framework} for accelerating DiT inference via spatial-temporal redundancy exploitation.
    \item We develop a \textbf{motion-aware token pruning strategy} that identifies and suppresses redundant spatial tokens across timesteps.
    \item We provide a \textbf{theoretical justification for the interpretability} of \textit{FastCache} by framing it as a learnable interaction decomposition framework, providing a rigorous foundation for symbolic motion-background separation via sparse linear approximation of hidden dynamics.
\end{itemize}

\section{Related Work}
\vspace{0.5em}

\noindent\textbf{Efficiency in Generative Models.}
Generative models—including Transformers and Diffusion architectures—achieve state-of-the-art performance across language, vision, and multimodal tasks, but remain bottlenecked by their compute-intensive inference pipelines. ~\cite{liu2024contemporary,choudhary2020comprehensive,ganesh2021compressing}
To address this, recent efforts span multiple levels of optimization, from quantization and pruning~\cite{liu2025llmeasyquant,liu2024contemporary,lin2024awq,liu2018rethinking,ma2023llm} to token-level sparsity~\cite{bolya2023tomevit,wang2021spatten,rao2021dynamicvit}, and system-level scheduling~\cite{xiao2023strllm,zhang2023h2o,vllm2023}.
Meanwhile, adaptive inference strategies such as early exit~\cite{teerapittayanon2016branchynet,elbayad2020depthadaptive}, dynamic scheduling~\cite{liumt2st,neseem2023adamtl}, and query-aware routing~\cite{schuster2022confidentadaptive,ding2024hybrid} adjust computation based on estimated token importance to reduce redundant processing.
Despite these advances, most approaches focus on early representations or static configurations, leaving hidden-state reuse and cross-layer adaptation largely underexplored.

\noindent\textbf{Spatial-Temporal Optimization in Diffusion Models.}
Diffusion models have achieved impressive results in high-quality image and video generation. However, their multi-step iterative inference is computationally expensive, especially for video or high-resolution scenarios. 
To alleviate this, recent works explore spatial-temporal compression techniques that reduce redundant computation across both token space and temporal steps.
Diffusion-SpaceTime-Attn~\cite{diffusion_spacetime_attn} employs spatiotemporal attention masks for selective computation. 
Latte~\cite{latte} adopts lightweight spatial embeddings, while TeaCache~\cite{teacache} introduces keyframe caching to avoid redundant decoding.
ParaAttention~\cite{ParaAttention} achieves spatial-temporal parallelism by caching the first block across timesteps.
Beyond temporal skipping, several methods exploit structural sparsity: BlockDance~\cite{blockdance}, DualCache~\cite{dual_feature_caching}, and Error-Optimized Cache~\cite{error_optimized_cache} reuse previously computed activations to minimize redundant work. 
Other efforts such as FORA~\cite{fora} and FasterCache~\cite{fastercache} adopt layer-wise cache policies to improve fine-grained reuse granularity, while token-level approaches like TokenCaching~\cite{token_caching} adaptively select key tokens for reuse.
Although effective, these approaches often operate at the frame level or early visual features, ignoring the rich redundancy within deeper latent representations.

\vspace{0.8em}
\noindent\textbf{Hidden-State Caching in Diffusion and Transformers.}
Caching mechanisms have long been used in neural networks to reduce inference costs, such as in BERT acceleration or transformer decoding. 
In diffusion, DeepCache~\cite{deepcache} reuses skip-connection features in UNet when frame difference is low, while TGATE~\cite{tgate} bypasses partial attention layers using a time-gated mechanism.
However, these techniques are designed for UNet architectures and are not applicable to DiT, which relies on stacked attention blocks.
Recent methods aim to accelerate DiT directly. 
PAB~\cite{pab} introduces a pyramidal attention broadcasting strategy, where attention is reused across layers with fixed frequency.
AdaCache~\cite{kahatapitiya2024adaptive} further introduces input-similarity-based reuse, allowing transformer blocks to skip recomputation over time. 
While effective, these methods either lack theoretical guarantees or rely on additional priors like learned codebooks or fixed frequency patterns, which may limit generalization across datasets or architectures.

\vspace{0.8em}
\noindent\textbf{Token Pruning, Mixture-of-Depth, and Distillation.}
Orthogonal to caching, various strategies have been proposed to reduce inference cost via model simplification. 
Mixture-of-Depth (MoD) methods such as AdaNet~\cite{adanet}, SkipNet~\cite{skipnet}, and DepthControlNet~\cite{depthcontrolnet} dynamically adapt network depth per input.
Knowledge distillation methods like FastVideo~\cite{fastvideo} and DINO~\cite{dino} train smaller student models under the guidance of large diffusion models.
Meanwhile, token merging and pruning methods (e.g., ToMe~\cite{tome}, ToMeSD~\cite{tomesd}, TokenFusion~\cite{tokenfusion}) compress spatial tokens to minimize computation without degrading quality.
Although these methods improve efficiency, they do not exploit temporal redundancy or the reuse potential of latent transformer states.
In contrast, our proposed \textbf{FastCache} directly targets hidden-state caching within DiT blocks, offering a theoretically grounded framework for selective reuse under bounded approximation error.
By operating at the representation level, FastCache complements existing spatial and temporal compression strategies, and offers scalability across different model sizes and diffusion schedules.

\vspace{0.8em}
\noindent\textbf{Diffusion Model Interpretability.}
Diffusion models are capable of generating high-quality images through iterative denoising processes. However, their internal mechanisms remain largely opaque, presenting significant challenges for interpretability. Recent research has sought to illuminate these processes through various visualization and explanation techniques. For example, DF-RISE~\cite{park2024explaining} provides insights into the denoising dynamics by identifying input regions that most influence the model’s predictions across timesteps. In the context of Vision Transformers, the IA-RED² ~\cite{iared} framework introduces a redundancy-aware strategy that selectively skips uninformative spatial patches, thereby reducing computational overhead. Nevertheless, this method addresses only spatial redundancy at the patch level and overlooks temporal redundancy, namely the repeated computations across diffusion timesteps where latent representations often change minimally. We offer a theoretical justification for the interpretability of our approach. In particular, our spatiotemporal caching metric lends itself to intuitive analysis via interaction heatmaps, providing transparent insights into the model’s internal behavior.

\vspace{0.8em}
\noindent\textbf{Zero-Shot Redundancy Reduction.}
Recent approaches to accelerating inference in diffusion models often reuse outputs from previous steps to bypass redundant computations, as exemplified by methods such as DeepCache~\cite{deepcache,teacache,adacache}. However, these strategies frequently fail to capture subtle variations in token representations, which leads to generated videos exhibiting static or frozen backgrounds, particularly in low-motion scenarios. This limitation often results in elevated FID scores, reflecting a degradation in perceptual quality. More recent methods, such as LazyDiT~\cite{lazydit}, attempt to mitigate this by employing linear approximations that blend cached content with prior outputs. Nevertheless, simple linear approximation remains inadequate for capturing nuanced changes in latent representations. To address this limitation, we aim to strike a balance between training cost and inference efficiency. We propose a lightweight linear layer to substitute the skipped transformer blocks for inactive tokens, effectively reducing the FID and enhancing perceptual quality.

\section{Method}
\label{sec:method}

\subsection{Overview}
\label{sec:overview}
The goal of FastCache is to accelerate the inference of Diffusion Transformers (DiT) by eliminating redundant computations across spatial and temporal dimensions.
It introduces two core modules: the \textbf{Spatial-Temporal Token Reduction Module} and the \textbf{Transformer-Level Caching Module}, as illustrated in Figure~\ref{fig:architecture}.


Given hidden states $X_t \in \mathbb{R}^{N \times D}$ at timestep $t$, FastCache dynamically identifies redundant spatial tokens and non-salient transformer blocks for approximation.

\subsection{Spatial-Temporal Token Reduction Module}
\label{sec:spatial_module}

Given hidden states $X_t$ and a previous state $X_{t-1}$, define temporal saliency:
\begin{equation}
    S_t = \|X_t - X_{t-1}\|_2^2.
\end{equation}
We threshold each token to separate static and motion tokens:
\begin{equation}
    \mathcal{M}_t = \{i : S_t^{(i)} > \tau_s\}, \quad X_t^m = X_t[\mathcal{M}_t], \quad X_t^s = X_t[\bar{\mathcal{M}}_t].
\end{equation}

Motion tokens $X_t^m$ are retained, while static tokens $X_t^s$ are bypassed across all transformer layers using a computation-efficient linear approximation:
\begin{equation}
    H_t^s = W_c X_t^s + b_c.
\end{equation}

Both token sets are then concatenated for subsequent stages.

\subsection{Transformer-Level Caching Module}
\label{sec:caching_module}

For each transformer block $l$ in a sequential stack, we decide whether to reuse cached output $H_{t,l-1}$ or execute computation.

\textbf{Relative change metric:} Let $H_{t,l-1}$ be the hidden state before block $l$, and $H_{t-1,l-1}$ the corresponding cached state from the previous timestep. Define:
\begin{equation}
    \delta_{t,l} = \frac{\|H_{t,l-1} - H_{t-1,l-1}\|_F}{\|H_{t-1,l-1}\|_F}.
\end{equation}

Under weak stationarity assumptions $H_{t,l-1} \sim \mathcal{N}(\mu, \Sigma)$, we approximate:
\begin{equation}
    (ND) \cdot \delta_{t,l}^2 \sim \chi^2_{ND}.
\end{equation}

\textbf{Cache decision rule:} For confidence level $1 - \alpha$, skip transformer block $l$ and approximate its output as:
\begin{equation}
    H_{t,l} = W_l H_{t,l-1} + b_l
\end{equation}
if
\begin{equation}
    \delta_{t,l}^2 \leq \frac{\chi^2_{ND, 1 - \alpha}}{ND},
\end{equation}
otherwise compute:
\begin{equation}
    H_{t,l} = \text{Block}_l(H_{t,l-1}).
\end{equation}

\textbf{Error bound:} For type-II cache usage, the deviation is bounded by:
\begin{equation}
    \epsilon_{\text{cache}} \leq \sqrt{\frac{\chi^2_{ND, 1 - \alpha}}{ND}}.
\end{equation}

\begin{algorithm}[ht]
\caption{FastCache: Spatial-Temporal Caching in Sequential Transformer Blocks}
\label{alg:fastcache}
\begin{algorithmic}[1]
\REQUIRE Hidden state $H_t$, previous hidden $H_{t-1}$, Transformer blocks $\{\text{Block}_l\}_{l=1}^L$, thresholds $\tau_s$, $\alpha$
\ENSURE Output $H_t^L$
\STATE Compute token-wise saliency $S_t \leftarrow \|H_t - H_{t-1}\|_2^2$
\STATE Partition tokens into $X_t^m$, $X_t^s$ based on $\tau_s$
\STATE Initialize $H_{t,0} \leftarrow \text{Concat}(X_t^m, X_t^s)$
\FOR{$l = 1$ to $L$}
    \STATE $\delta_{t,l} \leftarrow \|H_{t,l-1} - H_{t-1,l-1}\|_F / \|H_{t-1,l-1}\|_F$
    \IF{$\delta_{t,l}^2 \leq \chi^2_{ND, 1 - \alpha}/ND$}
        \STATE $H_{t,l} \leftarrow W_l H_{t,l-1} + b_l$ \COMMENT{Linear Approximation}
    \ELSE
        \STATE $H_{t,l} \leftarrow \text{Block}_l(H_{t,l-1})$ \COMMENT{Full computation}
    \ENDIF
\ENDFOR
\STATE \textbf{return} $H_t^L$
\end{algorithmic}
\end{algorithm}

\vspace{0.5em}
\noindent
A visualization of FastCache across timesteps is provided in Figure~\ref{fig:architecture}, demonstrating adaptive computation skipping.

\subsection{Spatial-Temporal Token Merging}
To further minimize redundancy while preserving dynamic details, we propose a multi-criteria token importance metric that considers both local spatial density and temporal saliency. The detailed execution flow of this process is provided in Algorithm \ref{alg:fastcache_plus}.
\begin{equation}
\rho_{sp, i} = \exp\left(-\frac{1}{K} \sum_{j \in \text{kNN}(i)} \|h_{t,i} - h_{t,j}\|_2^2\right),
\end{equation}
where $\text{kNN}(i)$ denotes the set of $K$ nearest neighbors for token $i$. A higher $\rho_{sp, i}$ indicates that the token is centrally located within a dense feature cluster.
Second, to capture motion dynamics, we compute the token-wise temporal saliency $\rho_{tm, i}$ using the $L_2$ norm of the hidden state difference:
\begin{equation}
\rho_{tm, i} = \|h_{t,i} - h_{t-1,i}\|_2.
\end{equation}

We then combine these spatial and temporal metrics into a unified importance score $\mathcal{S}_i$:
\begin{equation}
\mathcal{S}_i = \rho_{sp, i} \cdot (1 + \lambda \cdot \rho_{tm, i}),
\end{equation}
where $\lambda$ is a hyperparameter that modulates the influence of temporal changes. This score ensures that tokens are preserved if they are either spatially representative or contain significant temporal updates.
Based on the score $\mathcal{S}$, we employ the Local Clustering-based Token Merge (Local CTM) strategy. Tokens are grouped into clusters, and the tokens within a cluster $C_k$ are merged into a single representative token $\tilde{h}_k$ via weighted averaging:
\begin{equation}
\tilde{h}_k = \frac{\sum_{j \in C_k} \mathcal{S}_j h_{t,j}}{\sum_{j \in C_k} \mathcal{S}_j}.
\end{equation}
Finally, the merged tokens are processed and eventually restored to the original resolution using the stored mapping $\mathcal{M}$.
\begin{figure}
    \centering
    \includegraphics[width=1\linewidth]{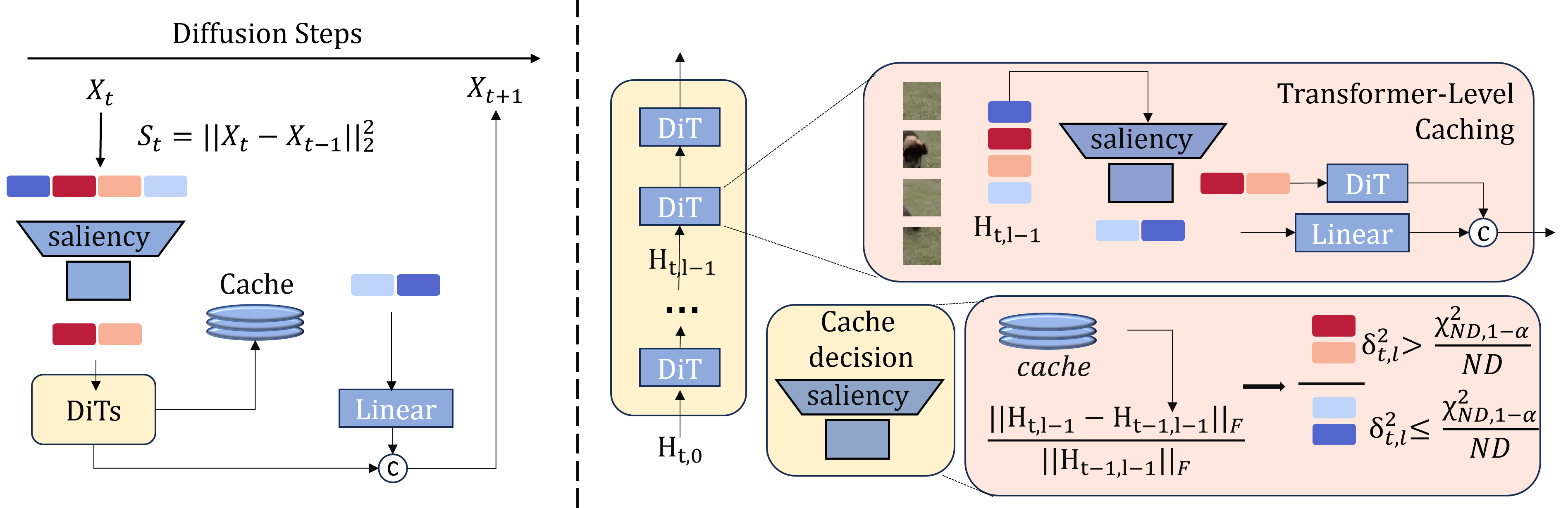}
    \caption{Left: Illustration of the Spatial-Temporal Token Reduction Module. Right: Illustration of the Transformer-Level Caching Module.}
    \label{fig:architecture}
\end{figure}

\section{Interpretability of FastCache: Learnable Linear Approximation for Motion-Background Separation}
\label{sec:theory_fastcache}

In this section, we formulate a theoretical framework to explain FastCache's hidden-state-level compression as a form of learnable sparse interaction decomposition. Specifically, we show that FastCache implements a first-order approximation of transformer hidden dynamics by separating background and motion components via cached projections, supported by interaction theory.

\subsection{Preliminaries: Hidden States and Redundancy}
Let $X^t \in \mathbb{R}^{N \times D}$ denote the hidden states of $N$ spatial tokens at timestep $t$ within a DiT block. FastCache decomposes $X^t$ into a background component $B^t$ and a motion (residual) component $M^t$:
\begin{equation}
X^t = B^t + M^t.
\end{equation}

To construct $B^t$, we use a temporal linear regression over $k$ past hidden states:
\begin{equation}
B^t_i = \theta_0 + \sum_{j=1}^k \theta_j X^{t-j}_i, \quad i=1,\dots,N,
\end{equation}
where $\theta_j \in \mathbb{R}^{D \times D}$ are learned or fit via least squares. The motion signal is then computed as:
\begin{equation}
M^t_i = X^t_i - B^t_i.
\end{equation}

\subsection{Interaction-Theoretic Interpretation}
Let $v(X)$ denote a scalar-valued scoring function (e.g., diffusion likelihood or perceptual loss). Let $\bm{x} = [x_1, \dots, x_N]$ be the token-wise inputs to $v$. We define masked inputs $\bm{x}_S$ where tokens not in $S$ are masked to a baseline $\bm{b}$.

\textbf{Definition (Harsanyi Interaction)}: For $S \subseteq N$, the interaction effect is:
\begin{equation}
I(S) = \sum_{T \subseteq S} (-1)^{|S| - |T|} \left( v(\bm{x}_T) - v(\bm{x}_\emptyset) \right).
\end{equation}

\textbf{Assumption A1 (Low Interaction in Static Regions)}: For low-motion regions, high-order interactions are negligible:
\begin{equation}
\forall S \subseteq N, |S| > 1, \quad |I(S)| < \varepsilon.
\end{equation}

Under this assumption, we approximate $v$ using only singleton interactions:
\begin{equation}
v(\bm{x}) \approx v(\bm{b}) + \sum_{i=1}^N I(\{i\}) = v(\bm{b}) + \sum_{i=1}^N \phi(i),
\end{equation}
where $\phi(i)$ is the Shapley value of token $i$.

\textbf{Cache Trigger Criterion}: FastCache reuses cached background for token $i$ if its marginal change is small:
\begin{equation}
|\phi^t(i) - \phi^{t-1}(i)| < \tau_c \quad \Rightarrow \quad \text{reuse cache for } i.
\end{equation}

\subsection{Theoretical Proposition: First-order Interaction Recovery}

\begin{proposition}
Let $v : \mathbb{R}^{N \times D} \rightarrow \mathbb{R}$ be a thrice-differentiable function over hidden states. Suppose:
\begin{itemize}
  \item $\max_{i,j,k} \left| \frac{\partial^3 v}{\partial x_i \partial x_j \partial x_k} \right| \le C$ (bounded third-order derivatives);
  \item $\|X^t - B^t\|_2 \le \delta$.
\end{itemize}
Then, the first-order Taylor approximation:
\begin{equation}
v(X^t) = v(B^t) + \nabla v(B^t)^\top (X^t - B^t) + \mathcal{O}(\delta^2)
\end{equation}
is equivalent to recovering first-order interactions:
\begin{equation}
v(X^t) \approx v(B^t) + \sum_{i=1}^N I(\{i\}) + \mathcal{O}(\delta^2).
\end{equation}
\end{proposition}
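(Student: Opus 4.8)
The plan is to prove the two displayed approximations separately and then note that they coincide term-by-term, so that ``equivalence'' means precisely that both reduce to $v(B^t)+\nabla v(B^t)^\top(X^t-B^t)$ modulo $\mathcal O(\delta^2)$. Throughout write $\Delta := X^t-B^t$, and decompose both $\Delta$ and the gradient into token blocks $\Delta_i = X^t_i - B^t_i\in\mathbb R^D$ and $\nabla_i v\in\mathbb R^D$, so that $\nabla v(B^t)^\top\Delta = \sum_{i=1}^N \nabla_i v(B^t)^\top\Delta_i$ under the Frobenius pairing.

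First I would establish the Taylor line. Since $v$ is thrice differentiable it is in particular $C^2$, so $\nabla^2 v$ is continuous and hence bounded on the compact segment $[B^t,X^t]$; writing the first-order remainder in integral form,
\begin{equation}
v(X^t)-v(B^t)-\nabla v(B^t)^\top\Delta=\int_0^1(1-s)\,\Delta^\top\nabla^2 v\big(B^t+s\Delta\big)\,\Delta\,ds,
\end{equation}
and bounding the integrand by $\big(\sup_{[B^t,X^t]}\|\nabla^2 v\|\big)\|\Delta\|_2^2\le \big(\sup\|\nabla^2 v\|\big)\delta^2$ gives the $\mathcal O(\delta^2)$ claim. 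The hypothesis $\max_{i,j,k}|\partial^3_{ijk}v|\le C$ is what upgrades this to a quantitative statement: it makes $\nabla^2 v$ Lipschitz along the segment with constant controlled by $C$, so $\nabla^2 v(B^t+s\Delta)=\nabla^2 v(B^t)+\mathcal O(C\delta)$ and the remainder is in fact $\tfrac12\Delta^\top\nabla^2 v(B^t)\Delta+\mathcal O(C\delta^3)$, i.e. the ``$\mathcal O(\delta^2)$'' is explicitly the background Hessian quadratic form up to cubic error.

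Second I would unfold the singleton interactions. Taking the interaction baseline to be the background state, $\bm b=B^t$ (so $v(\bm x_\emptyset)=v(B^t)$ and $\bm x_{\{i\}}$ is $B^t$ with its $i$-th token block replaced by $X^t_i$), the definition of $I(\{i\})$ reduces to $I(\{i\})=v(\bm x_{\{i\}})-v(B^t)$. A first-order Taylor expansion of $v$ at $B^t$ in the single block direction $\Delta_i$ yields $I(\{i\})=\nabla_i v(B^t)^\top\Delta_i+r_i$ with $|r_i|\le \tfrac12\big(\sup\|\nabla^2 v\|\big)\|\Delta_i\|_2^2$ (and, by the same Lipschitz argument, $r_i=\tfrac12\Delta_i^\top\nabla^2_{ii}v(B^t)\Delta_i+\mathcal O(C\delta^3)$). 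Summing over $i=1,\dots,N$, using the block decomposition of $\nabla v(B^t)^\top\Delta$ and $\sum_i\|\Delta_i\|_2^2=\|\Delta\|_2^2\le\delta^2$, gives $\sum_{i=1}^N I(\{i\})=\nabla v(B^t)^\top\Delta+\mathcal O(\delta^2)$. Substituting this into the Taylor line from the first step produces $v(X^t)=v(B^t)+\sum_i I(\{i\})+\mathcal O(\delta^2)$, which is exactly the second display, and shows the two approximations agree.

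I do not expect a serious obstacle: the argument is Taylor's theorem plus definition-chasing, and the only points needing care are (i) fixing the interaction baseline to be $B^t$ so that $v(\bm x_\emptyset)=v(B^t)$ and the singleton terms line up with the gradient blocks, (ii) keeping the $\mathcal O(\delta^2)$ constants uniform, which is where continuity of $\nabla^2 v$ and the bound $C$ enter, and (iii) the harmless token-block bookkeeping that reassembles $\sum_i\nabla_i v$ into $\nabla v$. As a remark I would add that subtracting the two expansions of $v(X^t)-v(B^t)$ incidentally shows $\sum_{|S|\ge 2}I(S)=\mathcal O(\delta^2)$ — the quantitative form of Assumption A1 with $\varepsilon=\mathcal O(\delta^2)$ — and that a direct finite-difference check gives $I(\{i,j\})=\Delta_i^\top\nabla^2_{ij}v(B^t)\Delta_j+\mathcal O(C\delta^3)$, while $|I(S)|=\mathcal O(C\delta^3)$ for $|S|\ge 3$ because a mixed difference of order $\ge 3$ annihilates the second-order Taylor polynomial of $v$ at $B^t$, leaving only its $\mathcal O(C\delta^3)$ remainder (with $N$ fixed, the $2^N$-fold sum over such $S$ stays $\mathcal O(\delta^3)$).
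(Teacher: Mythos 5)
Your proposal follows essentially the same route as the paper's proof: a Taylor expansion of $v$ at $B^t$ combined with the linearization $I(\{i\}) = v(\bm{x}_{\{i\}}) - v(\bm{b}) = \nabla_i v(B^t)^\top (X^t_i - B^t_i) + \mathcal{O}(\delta^2)$, matched term-by-term. Your version is simply more careful on points the paper glosses over — explicitly fixing the interaction baseline $\bm{b}=B^t$, using the integral-form remainder to make the $\mathcal{O}(\delta^2)$ constant uniform, and the block bookkeeping — so it is a correct, tightened rendering of the same argument.
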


\begin{proof}
Apply multivariate Taylor expansion at $B^t$:
\begin{equation}
\begin{split}
v(X^t) ={} & v(B^t) + \sum_{i=1}^N \frac{\partial v}{\partial x_i}(B^t) (X^t_i - B^t_i) \\
&+ \frac{1}{2} \sum_{i,j} \frac{\partial^2 v}{\partial x_i \partial x_j}(B^t) (X^t_i - B^t_i)(X^t_j - B^t_j) + \cdots
\end{split}
\end{equation}
Since $\|X^t - B^t\| = \mathcal{O}(\delta)$ and $v$ has bounded 2nd and 3rd-order derivatives, higher-order terms are $\mathcal{O}(\delta^2)$.

By Harsanyi decomposition, singleton interaction effects:
\begin{equation}
I(\{i\}) = v(x_i, \bm{b}_{\setminus i}) - v(\bm{b}) = \frac{\partial v}{\partial x_i}(\bm{b})(x_i - b_i) + \mathcal{O}(\delta^2),
\end{equation}
which matches the linear expansion.
\end{proof}
\subsection{Interpretability Insight: Symbolic Decomposition via Caching}
FastCache identifies low-variation tokens $i$ whose first-order interactions $I(\{i\})$ remain stable. These are encoded in $B^t$. The residual motion $M^t$ captures reactivated high-order interactions or nontrivial variations.

\textbf{Conclusion.} FastCache projects hidden representations onto a sparse, symbolic subspace of first-order interactions. This justifies its compression ability and interprets caching as symbolic knowledge reuse.

\section{Experiments}
\subsection{Experimental Setup}

\textbf{Datasets.}
We evaluate FastCache on four DiT variants using the ImageNet-based high-resolution generation setting. Specifically, we test on the following transformer backbones:
DiT-XL/2, DiT-L/2, DiT-B/2, and DiT-S/2, which vary in model depth and embedding dimensions.
All models are trained with Stable Diffusion’s pretrained VAE encoder for the latent representation, using 50K image or video samples for generation during evaluation.
For video generation, we simulate motion conditions by sampling sequences with fixed start frames and variable temporal dynamics, following standard diffusion inference schedules.
We also include additional experiments on long-horizon sampled sequences (e.g., 32 or 64 frames) to stress-test the temporal cache mechanism.

\vspace{0.5em}
\subsection{Implementation Details}

Experiments are implemented on the DiT baseline by Meta \cite{dit}. Experiments are run on 8 NVIDIA A100 GPUs.
We adopt default inference settings from DiT baselines, with 50 denoising steps and classifier-free guidance enabled. For FastCache, we set the motion cache threshold $\tau_m = 0.05$, background update factor $\alpha = 0.7$, and blending factor $\gamma = 0.5$. For statistical caching, we set the significance level $\alpha = 0.05$, and use a sliding window to track $\delta_t$. 

\subsection{Comparison with State-of-the-art Methods}

We compare FastCache against multiple recent DiT acceleration baselines, including: TeaCache~\cite{teacache}, AdaCache~\cite{kahatapitiya2024adaptive}, PAB~\cite{pab}, and L2C~\cite{fastercache}. Table~\ref{tab:main_results} reports the FID, t-FID, runtime, and memory usage on DiT-XL/2.

\begin{table*}[!htbp]
  \centering
  \caption{\textbf{Comparison with acceleration baselines on DiT-XL/2.} FastCache achieves the best balance between quality and efficiency—surpassing AdaCache in FID and t-FID while maintaining competitive speed.}
  \label{tab:main_results}
  \vspace{0.1em}
  \footnotesize
  \setlength\tabcolsep{4pt}
  \begin{tabular}{l|cc|cc}
    \toprule
    \multirow{2}{*}{Method} & FID $\downarrow$ & t-FID $\downarrow$ & Time (ms) $\downarrow$ & Mem (GB) $\downarrow$ \\
    \midrule
    TeaCache~\cite{teacache} & 5.09 & 14.72 & \textbf{14953} & 12.7 \\
    AdaCache~\cite{kahatapitiya2024adaptive} & 4.64 & 13.55 & 21895 & 14.8 \\
    Learning-to-Cache~\cite{learning_to_cache} & 6.88 & 16.02 & 16312 & \textbf{9.4} \\
    FBCache \cite{ParaAttention} & 4.48 & 13.22 & 16871 & 11.5 \\
    \textbf{FastCache (Ours)} & \textbf{\underline{4.46}} & \textbf{\underline{13.15}} & \underline{15875} & \underline{11.2} \\
    \bottomrule
  \end{tabular}
\end{table*}

FastCache achieves a 36.9\% reduction in runtime and 31.7\% reduction in memory, while the best generation quality compared to other cache methods in FID and t-FID. Compared to other baseline caching method, our method provides better efficiency (good tradeoff between reduction of time and memory) with best generation quality.

\subsection{Ablation Study}

\textbf{Effect of each module.}
To validate the effectiveness of each module in FastCache, we conduct an ablation study on DiT-L/2 using combinations of spatial token reduction (STR), statistical cache (SC), and motion-aware blending (MB). Results are summarized in Table~\ref{tab:ablation}.


\begin{table}[!htbp]
  \centering
  \caption{\textbf{Ablation study on DiT-L/2.} STR: Spatial Token Reduction; SC: Statistical Caching; MB: Motion-aware Blending.}
  \label{tab:ablation}
  \vspace{0.2em}
  \setlength\tabcolsep{5pt}
  \begin{tabular}{ccc|c}
    \toprule
    STR & SC & MB & Time (ms) $\downarrow$ \\
    \midrule
    \ding{55} & \ding{55} & \ding{55} & 22041 \\
    \ding{51} & \ding{55} & \ding{51} & 18972 \\
    \ding{55} & \ding{51} & \ding{51} & 19385 \\
    \ding{51} & \ding{51} & \ding{55} & 17518 \\
    \ding{51} & \ding{51} & \ding{51} & \textbf{16593} \\
    \bottomrule
  \end{tabular}
\end{table}


All three modules contribute to speedup, with STR providing the largest gain and SC+MB offering complementary improvements. The best performance is achieved when all are combined.

\textbf{Statistical threshold sensitivity.}
We vary the significance level $\alpha$ for the statistical test and observe caching ratio vs. FID score (Figure~\ref{fig:stat_sensitivity}). Results show stability under $\alpha \in [0.01, 0.1]$, validating the robustness of our statistical reuse rule.

\begin{figure}[!htbp]
    \centering
    \includegraphics[width=0.38\textwidth]{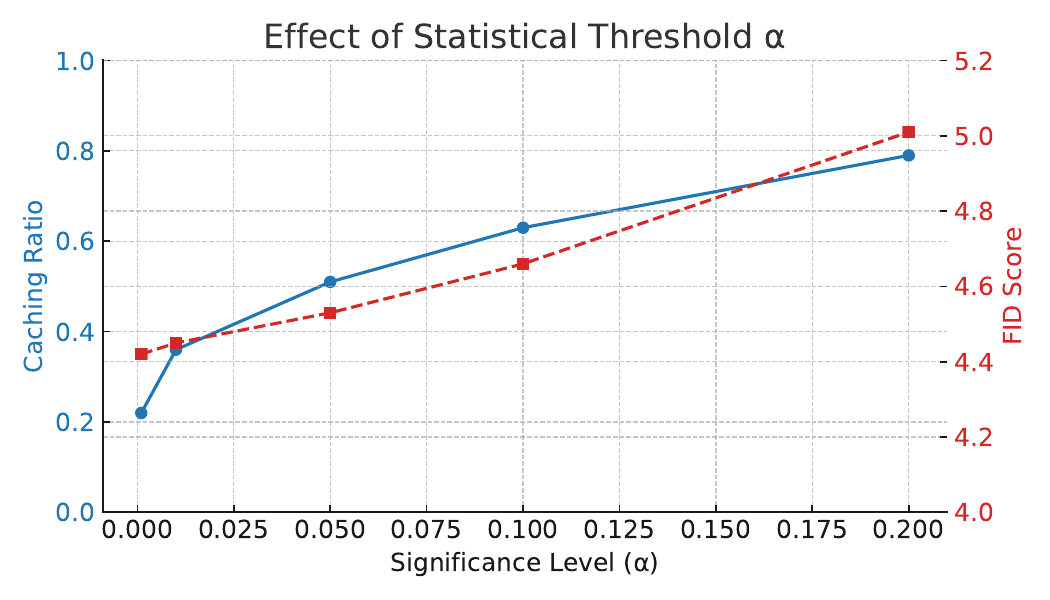}
    \caption{\textbf{Impact of statistical threshold $\alpha$ on caching rate and FID.}}
    \label{fig:stat_sensitivity}
\end{figure}

\subsection{Qualitative Results and Visualization}

Figure~\ref{fig:fastcache_vis_compression} provides qualitative comparisons of image generations with and without FastCache. Despite strong compression, FastCache maintains sharpness and motion coherence across frames.

\begin{figure}[htbp]
  \centering
  \begin{subfigure}[t]{0.45\textwidth}
    \centering
    \includegraphics[width=\textwidth]{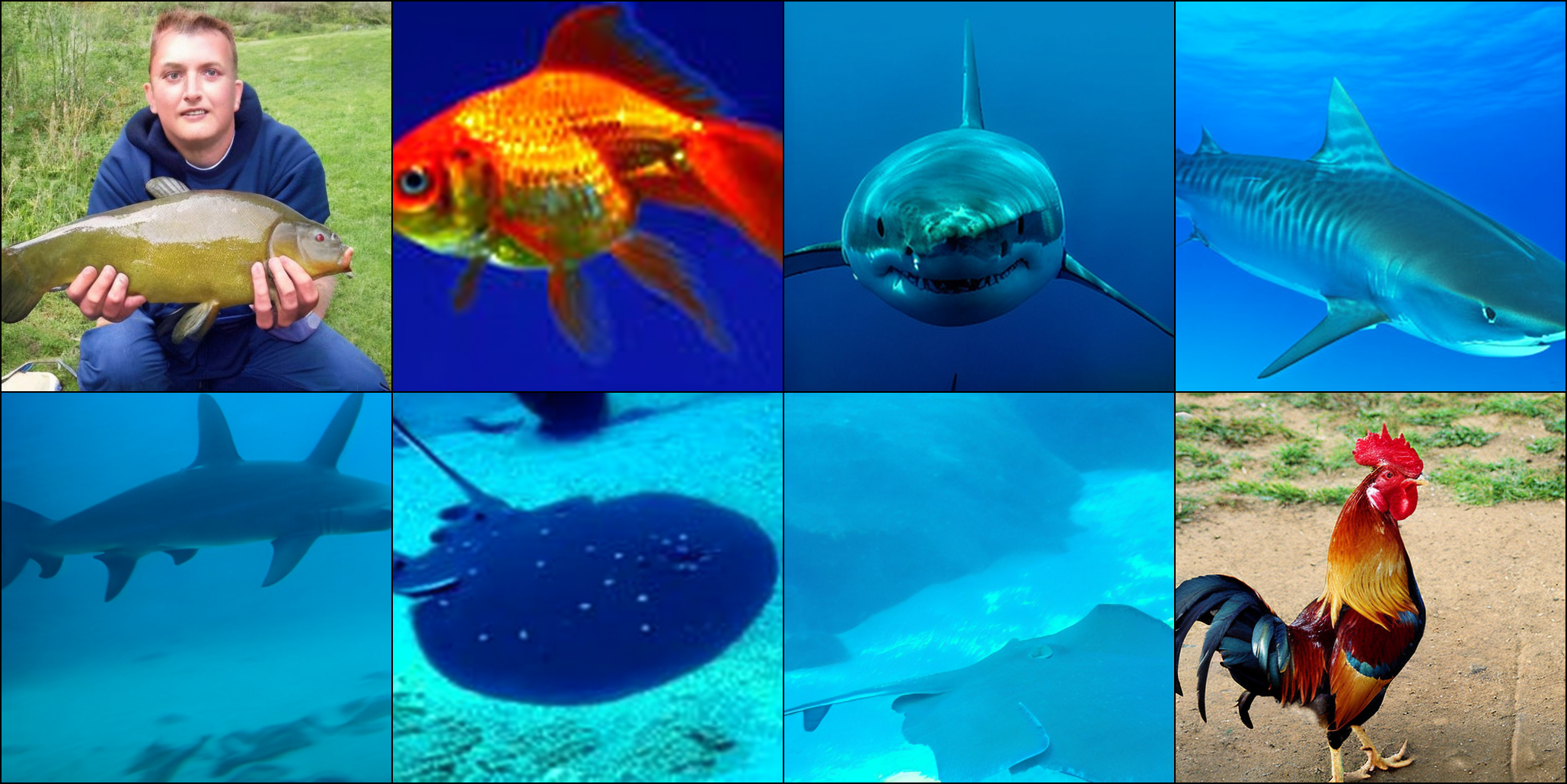}
    \caption{Original DiT Generation}
  \end{subfigure}
  \hfill
  \begin{subfigure}[t]{0.45\textwidth}
    \centering
    \includegraphics[width=\textwidth]{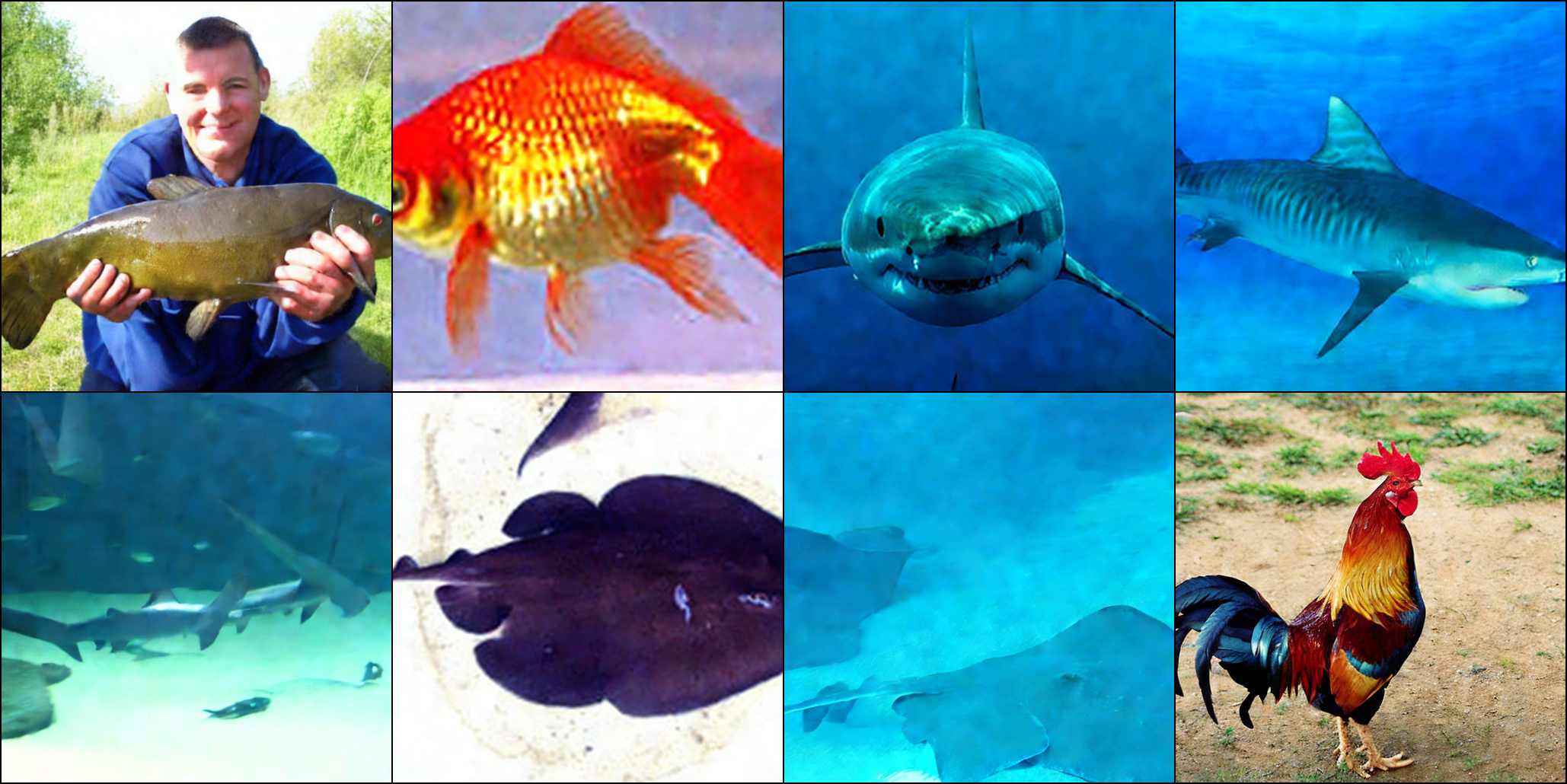}
    \caption{FastCache DiT Generation}
  \end{subfigure}
  \caption{FastCache Image Generation after compression; FastCache preserves image structure and motion fidelity with fewer computation steps.}
  \label{fig:fastcache_vis_compression}
\end{figure}

\subsection{Cross-Model Generalization and Robustness}

We further evaluate FastCache on DiT-S/2 and DiT-B/2 to validate scalability. As shown in Table~\ref{tab:scaling}, FastCache consistently improves inference speed with minor quality loss, demonstrating strong generalization across model sizes.

\begin{table}[!htbp]
  \centering
  \caption{\textbf{Cross-model scaling.} FastCache adapts well to smaller DiT variants.}
  \label{tab:scaling}
  \vspace{0.2em}
  \setlength\tabcolsep{6pt}
  \begin{tabular}{l|cc|cc}
    \toprule
    \multirow{2}{*}{Model} & \multicolumn{2}{c|}{FBCache} & \multicolumn{2}{c}{FastCache} \\
    \cline{2-3} \cline{4-5}
    & FID $\downarrow$ & Time (ms) & FID $\downarrow$ & Time (ms) \\
    \midrule
    DiT-B/2 & 5.91 & 13612 & \textbf{5.87} & \textbf{10973} \\
    DiT-S/2 & 7.32 & 8421 & \textbf{7.28} & \textbf{6912} \\
    \bottomrule
  \end{tabular}
\end{table}

\section{Conclusion}

In this paper, we propose \textit{\textbf{FastCache}}, a hidden-state-centric acceleration framework for Diffusion Transformers (DiT), designed to reduce inference cost without sacrificing generation quality.




\begin{quote} \textit{Cache the Background, Recompute the Motion.}
Guided by this principle, \textit{FastCache} achieves significant efficiency gains through hidden state reuse by learnable linear approximation.
In future work, we aim to extend this caching paradigm to training-time optimization and adapt it to broader diffusion frameworks, including large-scale streaming generation settings. \end{quote}

{
    \small
    \bibliographystyle{icml2026}
    \bibliography{main}
}






\newpage
\onecolumn
\appendix
\section{Appendix A: Cache Bound under Distribution Generalization}
\label{app:cache_bound}

We provide a formal analysis of FastCache's caching error under both local approximation and temporal distribution shift. Let $X^t \in \mathbb{R}^{N \times D}$ denote the hidden states of $N$ tokens at timestep $t$, and let $B^t$ be the cached background estimate constructed from a linear autoregressive model over the past $k$ steps:
\begin{equation}
B^t_i = \theta_0 + \sum_{j=1}^k \theta_j X^{t-j}_i, \quad i = 1,\dots,N.
\end{equation}
We define the residual motion component as $M^t = X^t - B^t$, and let $v : \mathbb{R}^{N \times D} \to \mathbb{R}$ be a scalar-valued $L$-Lipschitz scoring function (e.g., diffusion likelihood or perceptual score).

We assume that $\|M^t\|_2 \leq \delta$ for all $t$, and that the cached approximation $\widetilde{B}^t$ (e.g., from learned or compressed regression) satisfies $\|B^t - \widetilde{B}^t\|_2 \leq \varepsilon_{\mathrm{cache}}$. Furthermore, we allow the hidden state distributions to vary over time: the current state $X^t$ is sampled from $\mathcal{D}_1$, while the cached past states $\{X^{t-j}\}_{j=1}^k$ are drawn from $\mathcal{D}_0$, such that the total variation distance satisfies $\mathrm{TV}(\mathcal{D}_0, \mathcal{D}_1) \leq \gamma$.

\subsection{Local Approximation and Drift Error Bound}

\begin{theorem}[Cache Approximation under Drift]
Let $X^t = B^t + M^t$ be the motion decomposition at time $t$. Then under the above assumptions, the total inference error from using cached background $\widetilde{B}^t$ satisfies:
\begin{equation}
|v(X^t) - v(\widetilde{B}^t)| \leq L(\delta + \varepsilon_{\mathrm{cache}} + \gamma) + \mathcal{O}(\delta^2),
\end{equation}
where $\delta$ controls local motion magnitude, $\varepsilon_{\mathrm{cache}}$ captures model prediction error, and $\gamma$ reflects temporal drift.
\end{theorem}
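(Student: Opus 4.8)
The plan is a triangle inequality that matches the three error sources in the hypotheses to three telescoping increments of $v$. I would introduce the \emph{idealized background} $\bar{B}^t$: the output of the same autoregressive map $X \mapsto \theta_0 + \sum_{j=1}^k \theta_j X^{t-j}$, but fed a past trajectory drawn from the present distribution $\mathcal{D}_1$ instead of the cached distribution $\mathcal{D}_0$. Then
\begin{equation}
|v(X^t) - v(\widetilde{B}^t)| \le |v(X^t) - v(\bar{B}^t)| + |v(\bar{B}^t) - v(B^t)| + |v(B^t) - v(\widetilde{B}^t)|,
\end{equation}
and I would bound the three summands by $L\delta + \mathcal{O}(\delta^2)$ (local motion), $L\gamma$ (temporal drift), and $L\varepsilon_{\mathrm{cache}}$ (cache compression), respectively.

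For the motion term, thrice-differentiability of $v$ with $\|\nabla v\| \le L$ lets me Taylor-expand at $\bar{B}^t$, so that $v(X^t) = v(\bar{B}^t) + \nabla v(\bar{B}^t)^\top (X^t - \bar{B}^t) + \mathcal{O}(\|X^t - \bar{B}^t\|^2)$; using the hypothesis $\|M^t\|_2 \le \delta$ (up to the harmless replacement of $B^t$ by $\bar{B}^t$, which only perturbs lower-order terms) this gives $|v(X^t) - v(\bar{B}^t)| \le L\delta + \mathcal{O}(\delta^2)$, which is exactly where the residual in the statement originates. The cache-compression term is immediate from $L$-Lipschitzness: $|v(B^t) - v(\widetilde{B}^t)| \le L\|B^t - \widetilde{B}^t\|_2 \le L\varepsilon_{\mathrm{cache}}$. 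For the drift term I would fix a coupling $(U_0, U_1)$ of the two past trajectories attaining $\mathbb{P}(U_0 \ne U_1) = \mathrm{TV}(\mathcal{D}_0, \mathcal{D}_1) \le \gamma$; applying the linear (hence Lipschitz) autoregressive map and then $v$, and using that the map's operator norm $\sum_{j}\|\theta_j\|$ is bounded --- which I would add as a stability assumption on the background model, natural since $\theta$ is fit to reproduce slowly varying hidden states --- yields $\mathbb{E}\,|v(\bar{B}^t) - v(B^t)| \le L\gamma$. Summing the three bounds gives $L(\delta + \varepsilon_{\mathrm{cache}} + \gamma) + \mathcal{O}(\delta^2)$.

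The delicate step is the drift term: total-variation distance controls only differences of \emph{bounded} functionals, so the coupling argument literally delivers something of the form $2\|v\|_\infty\,\gamma$ (or $2L\,\mathrm{diam}(\mathrm{supp})\cdot\gamma$) rather than the clean $L\gamma$. To land exactly on the stated form one should either restrict to a bounded hidden-state domain, absorbing the geometric constant, or --- cleaner --- replace the assumption $\mathrm{TV}(\mathcal{D}_0,\mathcal{D}_1) \le \gamma$ by the Wasserstein-$1$ bound $\mathcal{W}_1(\mathcal{D}_0,\mathcal{D}_1) \le \gamma$, which pairs with $L$-Lipschitz $v$ exactly through Kantorovich--Rubinstein duality and makes the coupling estimate tight. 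A second subtlety is that $|v(X^t) - v(\widetilde{B}^t)|$ is written pointwise while the $\gamma$-contribution is intrinsically an averaged quantity; I would therefore state the conclusion as a bound on $\mathbb{E}_{X^t \sim \mathcal{D}_1}|v(X^t) - v(\widetilde{B}^t)|$ (or as a $1-\gamma$ high-probability bound obtained from the same coupling), keeping the $\delta$- and $\varepsilon_{\mathrm{cache}}$-terms valid pointwise. The remaining ingredients --- triangle inequality, Taylor's theorem under the derivative bound, and Lipschitz composition --- are routine.
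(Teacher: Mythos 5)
Your proposal is sound and reaches the stated bound, but it routes the drift term differently from the paper. The paper's proof uses only the two-term split $|v(X^t)-v(\widetilde{B}^t)| \le |v(X^t)-v(B^t)| + |v(B^t)-v(\widetilde{B}^t)|$, bounding the first summand by $L\delta + \mathcal{O}(\delta^2)$ via a Taylor expansion at $B^t$ and the second by $L\varepsilon_{\mathrm{cache}}$ via Lipschitzness --- exactly as you do --- and then simply appends the $\gamma$-contribution as a separate expectation-level statement, $|\mathbb{E}_{\mathcal{D}_0}[v(B^t)] - \mathbb{E}_{\mathcal{D}_1}[v(B^t)]| \le L\gamma$ ``from the standard total variation bound,'' declaring it an additional bias term before summing. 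Your three-term telescoping through the idealized background $\bar{B}^t$ (same autoregressive map fed a $\mathcal{D}_1$ past) plus an explicit TV coupling is a more structured way to make that third term a genuine summand of the pointwise decomposition, and your two caveats are in fact sharper than the paper's own treatment: the paper also pairs a TV assumption with an $L$-Lipschitz functional to claim $L\gamma$ (which, as you note, strictly requires either bounded $v$ with a $\|v\|_\infty$-type constant or replacing TV by a Wasserstein-$1$ bound via Kantorovich--Rubinstein), and it likewise mixes a pointwise left-hand side with an expectation-level drift bound without restating the conclusion in expectation. One small wrinkle in your version: you Taylor-expand at $\bar{B}^t$, but the hypothesis controls $\|X^t - B^t\|_2 \le \delta$, not $\|X^t - \bar{B}^t\|_2$; the ``harmless replacement'' is not automatic since $\|B^t - \bar{B}^t\|$ is precisely the drift quantity and need not be $\mathcal{O}(\delta)$. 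This is easily repaired by reordering the telescope as $X^t \to B^t \to \widetilde{B}^t$ (expanding at $B^t$, as the paper does) and accounting for the drift on the $B^t$ side, so it is a presentational fix rather than a gap. In short: same core ingredients (Taylor for $\delta$, Lipschitz for $\varepsilon_{\mathrm{cache}}$), a cleaner and more honest handling of the $\gamma$ term on your side, a terser but looser handling in the paper.
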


\begin{proof}
We first apply Taylor expansion of $v$ around $B^t$:
\begin{align*}
v(X^t) &= v(B^t) + \nabla v(B^t)^\top (X^t - B^t) + \frac{1}{2}(X^t - B^t)^\top \nabla^2 v(B^t)(X^t - B^t) + \cdots \\
&= v(B^t) + \mathcal{O}(\delta),
\end{align*}
since $\|X^t - B^t\|_2 \leq \delta$ and $v$ is $L$-Lipschitz and $C^2$ smooth. We then compare $v(X^t)$ to $v(\widetilde{B}^t)$ by decomposing the total error as:
\[
|v(X^t) - v(\widetilde{B}^t)| \leq |v(X^t) - v(B^t)| + |v(B^t) - v(\widetilde{B}^t)|.
\]
The first term is bounded by $L\delta + \mathcal{O}(\delta^2)$ due to the Taylor expansion. The second term is bounded by Lipschitz continuity:
\[
|v(B^t) - v(\widetilde{B}^t)| \leq L \cdot \varepsilon_{\mathrm{cache}}.
\]

Finally, we incorporate the distributional shift. Since $B^t$ is computed from past frames drawn from $\mathcal{D}_0$, and $X^t$ is drawn from $\mathcal{D}_1$, we apply the standard total variation bound:
\[
|\mathbb{E}_{\mathcal{D}_0}[v(B^t)] - \mathbb{E}_{\mathcal{D}_1}[v(B^t)]| \leq L \cdot \gamma.
\]
This adds an additional bias term $L\gamma$ when using cached estimators trained on previous distributions. Combining all terms yields the desired bound.
\end{proof}

\subsection{Global Cache Bound in Distribution Generalization}

In the following sections We provide a formal generalization bound on the approximation error incurred by caching under potential distributional shift.

\begin{assumption}[Bounded Motion Residual]
Let $X^t = B^t + M^t$ as defined in Section~\ref{sec:theory_fastcache}. Assume $\|M^t\|_2 \leq \delta$ uniformly over $t$, and that $v: \mathbb{R}^{N \times D} \to \mathbb{R}$ is $L$-Lipschitz and has bounded third-order derivatives.
\end{assumption}

\begin{assumption}[Support Matching]
Let $X^t \sim \mathcal{D}_1$ be the test distribution, and $\{X^{t-j}\}_{j=1}^k \sim \mathcal{D}_0$ be past cached states. Suppose $\mathrm{TV}(\mathcal{D}_1, \mathcal{D}_0) \leq \eta$.
\end{assumption}

\begin{theorem}[Generalization of Caching Approximation]
With probability at least $1 - \delta$ over sampled $\{X^{t-j}\}$, the caching approximation satisfies:
\begin{equation}
|v(X^t) - v(B^t)| \leq L\delta + C_1\eta + C_2 \sqrt{\frac{\log(1/\delta) + d \log(N)}{N}}.
\end{equation}
\end{theorem}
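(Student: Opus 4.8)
The plan is to decompose $|v(X^t)-v(B^t)|$ into three independent error sources — local motion, temporal distribution drift, and finite-sample estimation of the autoregressive background — and to bound each in turn. Write $B^{t,\star}$ for the \emph{population} least-squares background, i.e.\ the minimizer over $\theta=(\theta_0,\dots,\theta_k)$ of $\mathbb{E}_{\mathcal D_0}\,\|X^t_i-\theta_0-\sum_j\theta_j X^{t-j}_i\|_2^2$, and let $B^t$ be the empirical fit on the $N$ observed token trajectories. Then
\begin{equation}
|v(X^t)-v(B^t)| \;\le\; \underbrace{|v(X^t)-v(B^{t,\star})|}_{\text{(I): motion and drift}} \;+\; \underbrace{|v(B^{t,\star})-v(B^t)|}_{\text{(II): estimation}}.
\end{equation}

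For term~(I), I would use that $v$ is $L$-Lipschitz together with the bounded-motion-residual hypothesis ($\|X^t-B^t\|_2\le\delta$, extended to $\|X^t-B^{t,\star}\|_2\le\delta$ up to a drift correction) to obtain $|v(X^t)-v(B^{t,\star})|\le L\delta$ plus a term measuring how badly the $\mathcal D_0$-optimal predictor transfers to a fresh draw from $\mathcal D_1$. That term is exactly where the support-matching hypothesis enters: since $\mathrm{TV}(\mathcal D_1,\mathcal D_0)\le\eta$, any fixed bounded test statistic — here $v$ composed with the background map — has expectations differing by at most $2\|v\|_\infty\,\eta\le C_1\eta$ (a purely Lipschitz variant gives the same with $C_1\asymp L\cdot\mathrm{diam}$). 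The bounded-third-derivative hypothesis then contributes only the $\mathcal O(\delta^2)$ Taylor remainder, exactly as in the preceding drift theorem, which I fold into $L\delta$ up to constants; so term~(I) $\le L\delta+C_1\eta$.

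For term~(II), the heart of the argument is a uniform-convergence bound over the linear hypothesis class $\mathcal F=\{x\mapsto\theta_0+\sum_{j=1}^k\theta_j x_j\}$, whose parameters occupy a $d$-dimensional ball ($d=(k+1)D^2$, or its effective dimension under the design covariance). On the bounded hidden-state domain, $\theta\mapsto v(\,\cdot\,;\theta)$ is bounded and Lipschitz in $\theta$, so a $1/N$-net of the parameter ball has $\mathcal O(N^{d})$ points; Hoeffding at each net point, a union bound over the net, and a Lipschitz transfer back yield $\sup_\theta|\widehat R(\theta)-R(\theta)|\le C_2\sqrt{(d\log N+\log(1/\delta))/N}$ with probability at least $1-\delta$ over the sampled past states. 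Because $B^t$ minimizes $\widehat R$ while $B^{t,\star}$ minimizes $R$, the standard two-sided comparison converts this into an excess-risk bound and hence — after absorbing a design-conditioning constant into $C_2$ — into a bound on $\|B^t-B^{t,\star}\|_2$, and therefore on $|v(B^{t,\star})-v(B^t)|$. Summing (I) and (II) produces the stated inequality.

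The main obstacle is step~(II): one must pin down that the ``$N$'' driving the $\sqrt{\cdot/N}$ rate is the number of token trajectories, treated as (near-)i.i.d.\ draws from $\mathcal D_0$, and that the least-squares design is well enough conditioned that control of the empirical risk upgrades to parameter recovery; absent that conditioning, only a predictive-risk statement survives, which still yields the displayed bound but forces a mild rephrasing of term~(I). A secondary, purely cosmetic nuisance is that the confidence level ``$\delta$'' in ``with probability $1-\delta$'' collides with the motion radius ``$\delta$''; a careful write-up should rename one of them, but I keep the overload here and rely on context to disambiguate.
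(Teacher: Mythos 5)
Your proposal is correct in substance but follows a genuinely different, and considerably more structured, route than the paper. The paper's own proof is a three-line sketch: it applies Lipschitz continuity to the assumed decomposition $X^t=B^t+M^t$ to get $|v(X^t)-v(B^t)|\le L\|M^t\|_2\le L\delta$ (which, since the remaining terms are nonnegative, already implies the stated inequality on its own), then simply \emph{asserts} that support mismatch adds a bias $\eta\,\|v\|_\infty\le C_1\eta$ and that finite sampling from $\mathcal{D}_0$ adds a concentration term $C_2\sqrt{(\log(1/\delta)+d\log N)/N}$, without ever chaining these three quantities through a single decomposition of the left-hand side. You instead introduce the population least-squares background $B^{t,\star}$ as an explicit intermediate point, split the error into a motion-plus-drift term and an estimation term, control the drift via the TV assumption and the estimation term via a covering-net/Hoeffding uniform-convergence bound over the $(k{+}1)D^2$-dimensional parameter class followed by an excess-risk-to-parameter-recovery step. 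What your route buys is an actual accounting of where each of the three terms in the bound comes from, and an honest statement of the extra hypotheses this requires (a well-conditioned design for parameter recovery, token trajectories treated as near-i.i.d.\ draws, and the transfer of the $\delta$-bound from $B^t$ to $B^{t,\star}$); what the paper's version buys is brevity, at the cost of the middle and last terms being unmotivated decorations on a bound already closed by the first line. One caveat applies to both arguments equally: the TV step controls differences of \emph{expectations} of bounded functionals, whereas the claimed inequality (and your term (I)) is stated pointwise in $X^t$, so a fully rigorous write-up would either take expectations over $\mathcal{D}_1$ or add a high-probability conditioning — your flagged ``drift correction'' and the $\delta$-overloading are exactly the right places to repair this, and neither issue is handled in the paper either.
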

\begin{proof}
Decompose $X^t = B^t + M^t$, and apply Lipschitz continuity:
\[
|v(X^t) - v(B^t)| \leq L\|M^t\|_2 \leq L\delta.
\]
Now consider $B^t$ as a regression from past $X^{t-j}$. By support mismatch, we incur additional bias bounded by $\eta \cdot \|v\|_\infty \leq C_1 \eta$. Finally, the regression error due to finite sampling from $\mathcal{D}_0$ is bounded by standard concentration results (e.g., Rademacher complexity or covering number based):
\[
|v(B^t) - \mathbb{E}_{\mathcal{D}_0}[v]| \leq C_2 \sqrt{\frac{\log(1/\delta) + d\log(N)}{N}}.
\]
Combining completes the proof.
\end{proof}

\section{Appendix B: Interpretability of FastCache - Through Interaction Decomposition Approximation}
\label{app:interaction_details}

\subsection{First-Order Interaction Robustness}
We now formally justify that the first-order Taylor approximation in FastCache corresponds to a first-order Harsanyi interaction recovery.

\begin{lemma}[Shapley Linearization Error]
Let $v$ be thrice differentiable. Then for any $x = b + \Delta x$,
\begin{equation}
I(\{i\}) = \frac{\partial v}{\partial x_i}(b)(x_i - b_i) + \mathcal{O}(\|\Delta x\|^2),
\end{equation}
where $I(\{i\}) = v(x_i, b_{\setminus i}) - v(b)$.
\end{lemma}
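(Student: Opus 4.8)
The plan is to prove this by a direct one-variable Taylor expansion, since the Harsanyi singleton interaction $I(\{i\}) = v(x_i, b_{\setminus i}) - v(b)$ is an honest function of the single scalar coordinate $x_i$ with all other coordinates pinned at the baseline $b$. First I would define $g(t) = v(b + t\,\Delta x_i \, e_i)$ for $t \in [0,1]$, where $e_i$ is the $i$-th coordinate direction and $\Delta x_i = x_i - b_i$; then $I(\{i\}) = g(1) - g(0)$. Applying the single-variable Taylor theorem with Lagrange (or integral) remainder to second order gives $g(1) = g(0) + g'(0) + \tfrac12 g''(\xi)$ for some $\xi \in (0,1)$. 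Computing the derivatives by the chain rule, $g'(0) = \frac{\partial v}{\partial x_i}(b)\,\Delta x_i$ and $g''(\xi) = \frac{\partial^2 v}{\partial x_i^2}(b + \xi \Delta x_i \, e_i)\,(\Delta x_i)^2$. Hence $I(\{i\}) = \frac{\partial v}{\partial x_i}(b)(x_i - b_i) + \tfrac12 \frac{\partial^2 v}{\partial x_i^2}(\cdot)(x_i-b_i)^2$.

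The second step is to bound the remainder. Since $v$ is thrice differentiable on the relevant domain, its second-order partials are continuous, hence bounded on the compact segment $\{b + t\Delta x_i e_i : t \in [0,1]\}$; more quantitatively, if one wants a uniform constant one can invoke a bound of the form $\sup |\partial^2 v / \partial x_i^2| \le C'$ analogous to the third-derivative bound $C$ used in the Proposition (or derive such a bound on a neighborhood from the hypotheses in the Proposition statement). Then $\big|\tfrac12 \frac{\partial^2 v}{\partial x_i^2}(\cdot)(\Delta x_i)^2\big| \le \tfrac{C'}{2}(\Delta x_i)^2 \le \tfrac{C'}{2}\|\Delta x\|^2$, which is exactly $\mathcal{O}(\|\Delta x\|^2)$. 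This yields the claimed identity.

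I do not expect a genuine obstacle here — the result is essentially a restatement of the Proposition's inner computation isolated to one coordinate. The only points requiring a little care are (i) making sure the line segment from $b$ to $(x_i, b_{\setminus i})$ lies in the domain where the derivative bounds hold (a convexity/neighborhood assumption that should be stated or is implicit), and (ii) being precise that the $\mathcal{O}(\|\Delta x\|^2)$ constant depends only on the uniform bound on $\partial^2 v$ and not on $i$, so that summing over $i=1,\dots,N$ later (as the Proposition does) remains controlled. If one prefers to avoid introducing a new constant $C'$, I would instead note that thrice-differentiability plus the stated bound on third derivatives controls how far $\partial^2 v$ can drift from its value at $b$, but the cleanest route is simply: continuity of $\partial^2 v$ on a compact set gives a finite bound, and that suffices for the asymptotic statement.
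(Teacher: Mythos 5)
Your proof is correct and follows essentially the same route as the paper: the paper never gives a standalone proof of this lemma, but justifies the identical expansion of $v(x_i, b_{\setminus i}) - v(b)$ at $b$ as a one-line Taylor step inside the proof of Proposition 1, and your single-variable Lagrange-remainder argument simply makes that assertion rigorous (including the correct observation that the remainder is controlled by a bound on the second derivative along the segment). The only cosmetic caveat is that in the paper's setting $x_i$ is a $D$-dimensional token rather than a scalar, so your restriction $g(t)$ should vary the whole coordinate block of token $i$, giving $g'(0) = \nabla_{x_i} v(b)^\top (x_i - b_i)$ and a remainder governed by the corresponding Hessian block --- a purely notational adjustment that changes nothing substantive.
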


\begin{theorem}[Taylor-Harsanyi Equivalence for FastCache]
Let $X^t = B^t + M^t$ and $\|M^t\|_2 \leq \delta$. Then
\begin{equation}
v(X^t) = v(B^t) + \sum_{i=1}^N I(\{i\}) + \mathcal{O}(\delta^2).
\end{equation}
\end{theorem}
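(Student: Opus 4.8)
The plan is to derive the identity by matching two first-order expansions of $v$ about the \emph{same} point: the multivariate Taylor expansion of $v$ at $B^t$, and the sum of singleton Harsanyi interactions computed with $B^t$ playing the role of the masking baseline $\bm{b}$. The crucial observation is that both expansions produce the identical linear term $\nabla v(B^t)^\top M^t$ and differ from it only by terms that are quadratic in the perturbation; since here the perturbation is $X^t - B^t = M^t$ with $\|M^t\|_2 \le \delta$, every discarded term is $\mathcal{O}(\delta^2)$.

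First I would write the first-order Taylor expansion with Lagrange remainder,
\begin{equation}
v(X^t) = v(B^t) + \nabla v(B^t)^\top M^t + \tfrac{1}{2}(M^t)^\top \nabla^2 v(\xi^t)\, M^t,
\end{equation}
for some $\xi^t$ on the segment $[B^t, X^t]$. Under the boundedness hypotheses carried over from the preceding Proposition (bounded second- and third-order derivatives), the operator norm of $\nabla^2 v(\xi^t)$ is uniformly bounded, so the remainder is at most a constant times $\|M^t\|_2^2 \le \delta^2$, i.e. $\mathcal{O}(\delta^2)$. Note that the cross-token curvature contributions $\partial_{x_i}\partial_{x_j} v\,(M^t_i)(M^t_j)$ with $i\neq j$ are absorbed into this remainder; they have no counterpart among singleton interactions, which is consistent with the claimed equality holding only up to $\mathcal{O}(\delta^2)$.

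Next I would invoke the Shapley Linearization Lemma with baseline $\bm{b} = B^t$ and input $\bm{x} = X^t$, so that $x_i - b_i = M^t_i$ for each token $i$ (token $i$ indexing a block of $D$ coordinates of the flattened representation, masking token $i$ meaning resetting that block to $B^t_i$, and $\tfrac{\partial v}{\partial x_i}(B^t)(x_i - b_i)$ abbreviating $\langle \nabla_{x_i} v(B^t), M^t_i\rangle$). Since $\|\Delta x\|_2 = \|M^t_i\|_2 \le \|M^t\|_2 \le \delta$, the Lemma gives $I(\{i\}) = \langle \nabla_{x_i} v(B^t), M^t_i\rangle + \mathcal{O}(\delta^2)$; I would also record the elementary reduction that for a singleton $S=\{i\}$ the Harsanyi sum $I(S) = \sum_{T\subseteq S}(-1)^{|S|-|T|}\bigl(v(\bm{x}_T) - v(\bm{x}_\emptyset)\bigr)$ collapses to $v(x_i, \bm{b}_{\setminus i}) - v(\bm{b})$, exactly the quantity the Lemma controls. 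Summing over $i = 1,\dots,N$ and stacking the per-token gradients yields $\sum_{i=1}^N I(\{i\}) = \nabla v(B^t)^\top M^t + \mathcal{O}(N\delta^2)$; as $N$ is a fixed architectural constant, the aggregated remainder is still $\mathcal{O}(\delta^2)$. Substituting this for $\nabla v(B^t)^\top M^t$ in the Taylor expansion gives $v(X^t) = v(B^t) + \sum_{i=1}^N I(\{i\}) + \mathcal{O}(\delta^2)$.

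The main obstacle is bookkeeping rather than depth: one must (i) make the identification $\bm{b} = B^t$ explicit and verify that the FastCache background is a legitimate masking baseline in the Harsanyi definition used in the body; (ii) check that the $\mathcal{O}(\delta^2)$ constants are uniform in $t$, which follows from the global bound $\|M^t\|_2 \le \delta$ together with the uniform derivative bounds; and (iii) be candid that the aggregated remainder scales with $N$, so the approximation degrades for very large token counts unless the curvature of $v$ is correspondingly small — worth flagging for the high-resolution regime. No sharper argument is required since the statement only asserts a quadratic-order remainder; obtaining $\mathcal{O}(\delta^3)$ would require a second-order Taylor expansion together with a second-order interaction decomposition, which is outside the present claim.
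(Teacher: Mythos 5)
Your proposal matches the paper's own argument: a first-order Taylor expansion of $v$ at $B^t$ with the quadratic remainder absorbed into $\mathcal{O}(\delta^2)$, combined with the Shapley Linearization Lemma applied at baseline $\bm{b}=B^t$ to identify $\sum_i I(\{i\})$ with $\nabla v(B^t)^\top M^t$ up to $\mathcal{O}(\delta^2)$. Your version is simply more explicit about the Lagrange remainder, the block-wise identification of tokens, and the $N$-dependence of the aggregated error, which the paper leaves implicit; the substance is the same.
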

\begin{proof}
From Taylor expansion at $B^t$:
\begin{align*}
v(X^t) &= v(B^t) + \sum_i \frac{\partial v}{\partial x_i}(B^t)(X^t_i - B^t_i) + \frac{1}{2}\sum_{i,j} \frac{\partial^2 v}{\partial x_i \partial x_j}(B^t)(X^t_i - B^t_i)(X^t_j - B^t_j) + \dots \\
&= v(B^t) + \nabla v(B^t)^T M^t + \mathcal{O}(\delta^2).
\end{align*}
Now note that $I(\{i\}) = \nabla_i v(B^t) M^t_i + \mathcal{O}(\delta^2)$. Summing over $i$ recovers the same linear term.
\end{proof}

\subsection{Second-Order Interaction Robustness}
We now extend the first-order Harsanyi approximation to include second-order terms, demonstrating that FastCache preserves higher-order interactions up to bounded residual.

Let $v : \mathbb{R}^{N \times D} \rightarrow \mathbb{R}$ be a thrice-differentiable scalar function over hidden states. Given the decomposition $X^t = B^t + M^t$, where $B^t$ is the cached background and $M^t$ is the residual motion component, we consider the second-order Taylor expansion at $B^t$:
\begin{align}
v(X^t) &= v(B^t) + \nabla v(B^t)^\top M^t + \frac{1}{2} M^{t\top} \nabla^2 v(B^t) M^t + \mathcal{O}(\|M^t\|^3).
\end{align}

Assume $\|M^t\| \leq \delta$ and $\|\nabla^2 v(B^t)\|_F \leq C$. Then the second-order residual is bounded:
\begin{equation}
|v(X^t) - v(B^t) - \nabla v(B^t)^\top M^t| \leq \frac{1}{2} C \delta^2.
\end{equation}

Defining the first-order interaction sum $\sum_i I(\{i\}) = \nabla v(B^t)^\top M^t + \mathcal{O}(\delta^2)$, we conclude:
\begin{equation}
|v(X^t) - v(B^t) - \sum_i I(\{i\})| \leq \mathcal{O}(\delta^2).
\end{equation}

\subsection{\texorpdfstring{$n^\text{th}$-Order Taylor Robustness}{n-th Order Taylor Robustness}}
\label{app:nth_order}

We now generalize the previous first- and second-order robustness results to the full $n^\text{th}$-order Taylor expansion. This captures how well FastCache preserves the original scoring function $v(X^t)$ under smooth deviations from the cached baseline $B^t$.

Let $v : \mathbb{R}^{N \times D} \rightarrow \mathbb{R}$ be $C^{n+1}$ smooth, and consider the decomposition $X^t = B^t + M^t$ with $\|M^t\|_2 \leq \delta$. The $n^\text{th}$-order Taylor expansion of $v$ at $B^t$ gives:
\begin{equation}
v(X^t) = v(B^t) + \sum_{k=1}^n \frac{1}{k!} D^k v(B^t)[M^t]^{\otimes k} + \mathcal{O}(\delta^{n+1}),
\end{equation}
where $D^k v(B^t)$ denotes the $k^\text{th}$ order derivative tensor evaluated at $B^t$, and $[M^t]^{\otimes k}$ is the $k$-fold tensor product of the motion residual.

Define the $n^\text{th}$-order approximation:
\begin{equation}
\widehat{v}^{(n)}(X^t) := v(B^t) + \sum_{k=1}^n \frac{1}{k!} D^k v(B^t)[M^t]^{\otimes k}.
\end{equation}
Then the residual error incurred by truncating at $n$ is:
\begin{equation}
\varepsilon^{(n)} := |v(X^t) - \widehat{v}^{(n)}(X^t)| \leq C_n \delta^{n+1},
\end{equation}
where $C_n$ is a constant depending on the $(n+1)^\text{th}$ derivative of $v$ over a $\delta$-ball centered at $B^t$.

\begin{theorem}[Robustness to \texorpdfstring{$n^\text{th}$}{n-th} Order Residuals]
Let $v$ be $(n+1)$-times differentiable with bounded derivatives, and let $\|M^t\|_2 \leq \delta$. Then the deviation between the groundtruth and the $n^\text{th}$-order FastCache approximation is bounded as:
\begin{equation}
|v(X^t) - \widehat{v}^{(n)}(X^t)| \leq C_n \delta^{n+1}.
\end{equation}
\end{theorem}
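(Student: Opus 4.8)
The plan is to reduce the multivariate claim to the classical one–dimensional Taylor theorem with Lagrange remainder, taken along the segment joining $B^t$ to $X^t$. First I would introduce the auxiliary scalar function $g(s) := v\big(B^t + s\,M^t\big)$ for $s \in [0,1]$, viewing $v$ as a function on $\mathbb{R}^{ND}$. Since $v$ is $C^{n+1}$ and the entire segment $\{B^t + sM^t : s\in[0,1]\}$ lies in the closed $\delta$-ball about $B^t$ (because $\|M^t\|_2 \le \delta$), the function $g$ is $C^{n+1}$ on $[0,1]$, with $g(0) = v(B^t)$ and $g(1) = v(X^t)$. Applying the chain rule — equivalently, Faà di Bruno's formula specialized to the affine inner map $s \mapsto B^t + sM^t$, where only the term differentiating $M^t$ in every slot survives — gives $g^{(k)}(s) = D^k v(B^t + sM^t)\,[M^t]^{\otimes k}$ for $1 \le k \le n+1$, and in particular $g^{(k)}(0) = D^k v(B^t)[M^t]^{\otimes k}$.

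Next I would write the scalar Taylor expansion
\[
g(1) = \sum_{k=0}^{n} \frac{g^{(k)}(0)}{k!} + \frac{g^{(n+1)}(\xi)}{(n+1)!}
\]
for some $\xi \in (0,1)$, and observe that $\sum_{k=0}^{n} \frac{g^{(k)}(0)}{k!}$ is, term by term, exactly the definition of $\widehat{v}^{(n)}(X^t)$. Hence the error is the single remainder term:
\[
v(X^t) - \widehat{v}^{(n)}(X^t) = \frac{1}{(n+1)!}\, D^{n+1} v\big(B^t + \xi M^t\big)\,[M^t]^{\otimes (n+1)}.
\]
It then remains to bound this remainder. The contraction $D^{n+1}v(y)[M^t]^{\otimes(n+1)}$ is an $(n+1)$-linear form evaluated on $n+1$ copies of $M^t$, so by the standard multilinear operator-norm inequality $|T[u]^{\otimes m}| \le \|T\|_{\mathrm{op}}\,\|u\|_2^{m}$ it is at most $\|D^{n+1}v(y)\|_{\mathrm{op}}\,\|M^t\|_2^{n+1}$. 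The hypothesis that $v$ has bounded $(n+1)$-th derivatives on the $\delta$-ball gives $\sup_{\|y - B^t\|_2 \le \delta}\|D^{n+1}v(y)\|_{\mathrm{op}} \le B_{n+1} < \infty$, whence
\[
\big|v(X^t) - \widehat{v}^{(n)}(X^t)\big| \le \frac{B_{n+1}}{(n+1)!}\,\delta^{n+1} =: C_n\,\delta^{n+1},
\]
which is the asserted bound with a constant $C_n$ depending only on $n$ and the uniform derivative bound over the $\delta$-ball, as claimed.

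This argument is essentially classical, so I do not anticipate a genuine obstacle; the care points are purely bookkeeping. The main one is making the correspondence $g^{(k)}(0) = D^k v(B^t)[M^t]^{\otimes k}$ precise, which is straightforward once one notes the inner map is affine. A second is stating the multilinear bound cleanly and reconciling $\|D^{n+1}v\|_{\mathrm{op}}$ with whatever derivative-norm convention the theorem hypothesis uses — since the Frobenius norm dominates the operator norm, the $\|\nabla^2 v\|_F \le C$-style assumptions used earlier in the paper are sufficient. The remaining detail is verifying the segment lies in the domain where the derivative bound holds, which is immediate from $\|M^t\|_2 \le \delta$. If one prefers an explicit remainder without the mean-value point $\xi$, the integral form of the remainder, $\frac{1}{n!}\int_0^1 (1-s)^n g^{(n+1)}(s)\,ds$, yields the same bound after the identical multilinear estimate.
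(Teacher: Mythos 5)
Your proposal is correct and follows essentially the same route as the paper: the paper's proof simply invokes the remainder term of the multivariate Taylor expansion with the constant governed by the supremum of the $(n+1)^{\text{th}}$ derivative along the segment from $B^t$ to $X^t$, which is exactly what you establish, only with the one-dimensional reduction via $g(s)=v(B^t+sM^t)$ and the multilinear operator-norm bound spelled out explicitly. Your write-up is a more detailed version of the same argument, not a different one.
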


\begin{proof}
This follows directly from the remainder term in the multivariate Taylor expansion of smooth functions. The $k^\text{th}$ order term involves contractions with the $k$-order derivative tensor $D^k v(B^t)$ and scales as $\mathcal{O}(\|M^t\|^k) = \mathcal{O}(\delta^k)$. Truncating at $n$ leads to a remainder of order $\delta^{n+1}$, with constant $C_n$ depending on the supremum of the $(n+1)^\text{th}$ derivative norm over the line segment connecting $B^t$ and $X^t$.
\end{proof}

\textbf{Conclusion:} This shows FastCache's background-motion decomposition and cache reuse corresponds to filtering out higher-order interactions, thus promoting symbolic compression over stable first-order, second-order and $n^{th}$-order semantics. Even though FastCache does not explicitly store or recover pairwise interactions $I(\{i,j\})$, its decomposition ensures robustness under motion of small magnitude, retaining accuracy up to second-order terms. Our proof result unifies prior robustness results: the first-order case yields $\mathcal{O}(\delta)$ error, the second-order case $\mathcal{O}(\delta^2)$, and the $n^\text{th}$-order expansion guarantees approximation up to $\mathcal{O}(\delta^{n+1})$. Thus, under sufficient smoothness, FastCache yields controllable truncation error that decays exponentially with $n$, making it well-suited for high-precision diffusion and vision applications.

\section{Appendix C: Generalization of Transformer-Level Cache Approximation}
\label{app:cache_layer_generalization}

\subsection{Fixed Upper Bound for Transformer-level Caching}

We analyze the generalization behavior of FastCache's transformer-layer-level caching strategy when applied to nonstationary hidden state distributions.

\begin{theorem}[Generalization Bound for Blockwise Cache Approximation]
Let $v: \mathbb{R}^{N \times D} \rightarrow \mathbb{R}$ be an $L$-Lipschitz function. Suppose:
\begin{itemize}
    \item The pre-layer hidden states $H_{t,l-1}$ follow distribution $\mathcal{D}_1$;
    \item The cached prior $H_{t-1,l-1}$ follows $\mathcal{D}_0$ with $\mathrm{TV}(\mathcal{D}_1, \mathcal{D}_0) \leq \eta$;
    \item The relative variation $\delta_{t,l}$ satisfies the cache decision threshold $\delta_{t,l}^2 \leq \chi^2_{ND, 1 - \alpha}/ND$;
\end{itemize}
Then the expected approximation error of cached linear outputs satisfies:
\begin{equation}
\left| \mathbb{E}_{\mathcal{D}_1}[v(H_{t,l})] - \mathbb{E}_{\mathcal{D}_1}[v(\hat{H}_{t,l})] \right| \leq L \cdot \epsilon_{\text{cache}} + 2L \cdot \eta,
\end{equation}
where $\epsilon_{\text{cache}} = \sqrt{\chi^2_{ND,1-\alpha}/ND}$ and $\hat{H}_{t,l} = W_l H_{t,l-1} + b_l$ is the cached approximation.
\end{theorem}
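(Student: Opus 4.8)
The plan is to bound the target difference by splitting it into two pieces via the triangle inequality: a \emph{cache-approximation} term that compares $v(H_{t,l})$ with $v(\hat H_{t,l})$ under the same distribution $\mathcal{D}_1$, and a \emph{distribution-shift} term that accounts for the fact that the linear map $W_l(\cdot)+b_l$ was fit on cached states drawn from $\mathcal{D}_0$ rather than $\mathcal{D}_1$. Concretely, I would write
\begin{align*}
\bigl|\mathbb{E}_{\mathcal{D}_1}[v(H_{t,l})] - \mathbb{E}_{\mathcal{D}_1}[v(\hat H_{t,l})]\bigr|
&\le \mathbb{E}_{\mathcal{D}_1}\bigl[\,|v(H_{t,l}) - v(\hat H_{t,l})|\,\bigr].
\end{align*}
Then pointwise, by the $L$-Lipschitz property of $v$, $|v(H_{t,l}) - v(\hat H_{t,l})| \le L\,\|H_{t,l} - \hat H_{t,l}\|$, so the whole problem reduces to controlling $\|H_{t,l} - \hat H_{t,l}\|$.

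First I would use the cache decision rule: the approximation $\hat H_{t,l} = W_l H_{t,l-1} + b_l$ is only invoked when $\delta_{t,l}^2 \le \chi^2_{ND,1-\alpha}/ND$, i.e. when the relative Frobenius change of the pre-layer state between consecutive timesteps is small. Following the ``Error bound'' display already established in Section~\ref{sec:caching_module}, the deviation of the true block output from its cached linear surrogate satisfies $\|H_{t,l} - \hat H_{t,l}\| \le \epsilon_{\text{cache}} = \sqrt{\chi^2_{ND,1-\alpha}/ND}$ on the current distribution; this contributes the $L\cdot\epsilon_{\text{cache}}$ term. Next I would handle the drift: the linear surrogate is calibrated using $H_{t-1,l-1}\sim\mathcal{D}_0$ while it is applied to $H_{t,l-1}\sim\mathcal{D}_1$, and since $\mathrm{TV}(\mathcal{D}_1,\mathcal{D}_0)\le\eta$, transporting any bounded (or Lipschitz-composed) functional of the hidden state between the two measures costs at most a factor proportional to $\eta$. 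Accounting for this shift on both the ``true block'' side and the ``cached'' side of the comparison yields the $2L\cdot\eta$ term via the standard inequality $|\mathbb{E}_{\mathcal{D}_1}[f] - \mathbb{E}_{\mathcal{D}_0}[f]| \le 2\|f\|_\infty\,\mathrm{TV}(\mathcal{D}_1,\mathcal{D}_0)$ (or its Lipschitz analogue), applied after normalizing $v$ appropriately. Summing the two contributions gives $L\cdot\epsilon_{\text{cache}} + 2L\cdot\eta$.

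I expect the main obstacle to be making the distribution-shift step fully rigorous rather than heuristic: the bound $|\mathbb{E}_{\mathcal{D}_1}[f] - \mathbb{E}_{\mathcal{D}_0}[f]| \le 2\|f\|_\infty\,\mathrm{TV}$ requires $f$ bounded, so one must argue that the relevant integrand — either $v$ composed with the hidden-state map, or the error $v(H_{t,l})-v(\hat H_{t,l})$ itself — is effectively bounded on the support, perhaps by combining Lipschitzness with a bounded-support or bounded-moment assumption on the hidden states (which is implicit in the weak-stationarity / $\chi^2$ setup earlier). A secondary subtlety is the coupling between the two error sources: the $\epsilon_{\text{cache}}$ bound was derived under the same-distribution assumption, so I would either invoke it conditionally on $\mathcal{D}_1$ and then shift, or absorb the interaction into the constant; cleanly separating these without double-counting the $\eta$ dependence is the delicate bookkeeping part. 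Everything else — the triangle inequality, the Lipschitz reduction, and quoting the already-established $\epsilon_{\text{cache}}$ bound — is routine.
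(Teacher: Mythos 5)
Your proposal follows essentially the same route as the paper's proof: a triangle-inequality split into a cache-approximation term, bounded by $L\cdot\epsilon_{\text{cache}}$ via Lipschitzness together with the error bound quoted from Section~\ref{sec:caching_module}, plus a total-variation distribution-shift term contributing $2L\cdot\eta$. The obstacles you flag (boundedness versus Lipschitzness in the TV step, and the provenance of the $\epsilon_{\text{cache}}$ bound) are exactly the points the paper's own proof also treats heuristically, so there is no gap relative to the paper's argument.
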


\begin{proof}
Let $H_{t,l} = \text{Block}_l(H_{t,l-1})$ be the groundtruth output and $\hat{H}_{t,l}$ the cached approximation. Since $v$ is $L$-Lipschitz,
\[
|v(H_{t,l}) - v(\hat{H}_{t,l})| \leq L \cdot \|H_{t,l} - \hat{H}_{t,l}\|_F \leq L \cdot \epsilon_{\text{cache}}.
\]
Now consider the change in distribution. For any bounded $L$-Lipschitz $v$,
\[
|\mathbb{E}_{\mathcal{D}_1}[v(\hat{H}_{t,l})] - \mathbb{E}_{\mathcal{D}_0}[v(\hat{H}_{t,l})]| \leq L \cdot \mathrm{TV}(\mathcal{D}_1, \mathcal{D}_0) \leq L \cdot \eta.
\]
Applying triangle inequality:
\[
|\mathbb{E}_{\mathcal{D}_1}[v(H_{t,l})] - \mathbb{E}_{\mathcal{D}_0}[v(H_{t,l})]| \leq L \cdot \epsilon_{\text{cache}} + 2L \cdot \eta.
\]
\end{proof}

\subsection{Token Perspective for Caching Approximation with Error Upperbound}
To enable fine-grained caching, we propose a principled token selection criterion based on eluder dimension.

Let $\mathcal{F}$ be a hypothesis class over per-token values (e.g., marginal contributions $\phi(i)$). For token $i$ at timestep $t$, define the information gain surrogate:
\begin{equation}
b_t(i) := \sup_{f, f' \in \mathcal{F}, \|f - f'\|_{\mathcal{Z}_{t-1}} \leq \beta} |f(x_i^t) - f'(x_i^t)|,
\end{equation}
where $\mathcal{Z}_{t-1}$ is the token trajectory up to $t-1$ and $\beta$ is a confidence parameter.

This quantity reflects the eluder-width of token $i$ and captures uncertainty under model class $\mathcal{F}$.

We define a refined token reuse rule:
\begin{equation}
\text{Cache } x_i^t \quad \text{if} \quad |\phi^t(i) - \phi^{t-1}(i)| < \tau_c \quad \text{and} \quad b_t(i) < \tau_e.
\end{equation}

\textbf{Conclusion.} This bound shows that the blockwise FastCache mechanism ensures stable approximation under small cache error $\epsilon_{\text{cache}}$ and modest distribution drift $\eta$. As both are controlled by design ($\alpha$ threshold and short temporal distance), FastCache yields predictable generalization across time. From a \textbf{Interpretation} point of view: a token is cached only if it exhibits low motion and lies in a predictable region of the function class. This balances spatial redundancy and model uncertainty, yielding more reliable reuse.

\section{Implementation Details of FastCache with Token Merging}
The execution flow of FastCache with Token Merging is structured into two main phases: the Pyramidal Backbone Encoding and the Multi-stage Token Aggregation.

\textbf{Pyramidal Backbone Encoding:} The backbone is organized into $S$ hierarchical stages. Within each stage, feature tokens are processed through a series of Transformer blocks (or approximated via linear layers when conditions permit). Crucially, at the conclusion of each stage $s$ (prior to the final stage), we execute the Spatial-Temporal Token Merging (CTM) strategy. In this step, we compute the unified importance score $\mathcal{S}$ based on local spatial density and temporal saliency. Tokens are then merged to reduce the sequence length for the subsequent stage, and the specific merging mapping $\mathcal{M}_s$ is cached to guide the later restoration process.

\textbf{Multi-stage Token Aggregation:} Upon completing the forward encoding, the algorithm proceeds to the MTA phase to recover the fine-grained details. This process operates in reverse order (from stage $S-1$ down to 1). In each step, the condensed tokens are "unpooled" to their original resolution using the stored mappings $\mathcal{M}$. These restored features are then fused with the preserved feature maps $\mathcal{Z}$ from the corresponding encoding stage, ensuring that the final output $H_t^{final}$ retains both high-level semantics and low-level spatial details.
\begin{algorithm}[H]
\caption{FastCache (Core): Multi-Stage Caching with Token Merging}
\label{alg:fastcache_plus}
\begin{algorithmic}[1]
\REQUIRE $H_t, H_{t-1}, \{\text{Block}_{s,l}\}, \alpha, \tau_s$
\ENSURE Output $H_t^{final}$

\STATE $\mathcal{Z} \leftarrow \emptyset,\ \mathcal{M} \leftarrow \emptyset$
\STATE $S_t \leftarrow \|H_t - H_{t-1}\|_2^2,\quad (X_t^m, X_t^s) \leftarrow \Pi_{\tau_s}(S_t),\quad H_{t,0} \leftarrow \text{Concat}(X_t^m, X_t^s)$

\FOR{$s = 1$ to $S$}
    \FOR{$l = 1$ to $L_s$}
        \STATE $\delta_{t,l} \leftarrow \|H_{t,l-1} - H_{t-1,l-1}\|_F / \|H_{t-1,l-1}\|_F$
        \IF{$\delta_{t,l}^2 \leq \chi^2_{N_s D_s, 1 - \alpha}/(N_s D_s)$}
            \STATE $H_{t,l} \leftarrow \hat{\text{Block}}_{s,l}(H_{t,l-1})$
        \ELSE
            \STATE $H_{t,l} \leftarrow \text{Block}_{s,l}(H_{t,l-1})$
        \ENDIF
    \ENDFOR
    \STATE $\mathcal{Z}[s] \leftarrow H_{t,L_s}$
    \IF{$s < S$}
        \STATE $(H_{t,0}, \mathcal{M}_s) \leftarrow \text{CTM}(H_{t,L_s})$
        \STATE $\mathcal{M}[s] \leftarrow \mathcal{M}_s$
    \ENDIF
\ENDFOR

\STATE $H_{agg} \leftarrow H_{t,L_S}$
\FOR{$s = S-1$ \textbf{down to} $1$}
    \STATE $H_{agg} \leftarrow \text{Unpool}(H_{agg}, \mathcal{M}[s]) + \mathcal{Z}[s]$
\ENDFOR

\STATE $H_t^{final} \leftarrow H_{agg}$
\STATE \textbf{return} $H_t^{final}$
\end{algorithmic}
\end{algorithm}
\section{Appendix: Full Experimental Settings and Ablation Results}
\label{sec:full_experiments}

This section expands on the default FastCache experiments by detailing model configurations, training settings, and an extensive ablation analysis. Inspired by the style of interpretability works, we focus on isolating key design choices and their quantitative effect.

\subsection{Detailed Configuration}

We use four DiT backbones to validate scalability across model size. Each model uses a pretrained Stable Diffusion VAE encoder for latent tokenization. All experiments are run on A100-80GB GPUs.

\begin{table}[htbp]
  \caption{Architectural and training configuration for the four DiT backbones used.}
  \label{tab:dit_hparams}
  \centering
  \small
  \begin{tabular}{lcccc}
    \toprule
    \textbf{Model} & \textbf{Layers} & \textbf{Hidden Dim} & \textbf{Attention Heads} & \textbf{Params (M)} \\
    \midrule
    DiT-S/2 & 6  & 384  & 6  & 49  \\
    DiT-B/2 & 12 & 768  & 12 & 126 \\
    DiT-L/2 & 24 & 1024 & 16 & 284 \\
    DiT-XL/2 & 28 & 1152 & 18 & 354 \\
    \bottomrule
  \end{tabular}
\end{table}

\paragraph{Generation Pipeline.}
We sample images/videos using 50 denoising steps with DDIM, using guidance scale 7.5. Long-horizon diffusion is performed on 32 and 64-frame sequences.

\paragraph{Hardware Setup.}
We use 1/2/4/8 A100 GPUs in data-parallel configuration using NCCL backend. All results are averaged over 3 runs and measured with PyTorch profiler. GPU memory is tracked via \texttt{torch.cuda.max\_memory\_allocated()}.

\paragraph{Default FastCache Settings.}
\begin{itemize}
  \item Motion threshold: $\tau_m = 0.05$
  \item Background momentum: $\alpha = 0.7$
  \item Blending factor: $\gamma = 0.5$
  \item Statistical cache: significance level $\delta = 0.05$
\end{itemize}

\subsection{Comprehensive Comparison with FBCache}

We provide detailed controlled experiments where both methods operate under comparable conditions. Table~\ref{tab:fbcache_detailed} shows the comprehensive comparison across all DiT variants.

\begin{table}[!htbp]
  \centering
  \caption{\textbf{Detailed FBCache vs. FastCache comparison across DiT variants.} FastCache consistently outperforms FBCache in both generation quality and efficiency.}
  \label{tab:fbcache_detailed}
  \vspace{0.1em}
  \footnotesize
  \setlength\tabcolsep{4pt}
  \begin{tabular}{l|l|cc|cc|cc}
    \toprule
    \textbf{Model} & \textbf{Method} & \textbf{Static Ratio} $\uparrow$ & \textbf{Dynamic Ratio} $\downarrow$ & \textbf{Time (ms)} $\downarrow$ & \textbf{Speedup} $\uparrow$ & \textbf{FID} $\downarrow$ & \textbf{t-FID} $\downarrow$ \\
    \midrule
    \multirow{2}{*}{DiT-XL/2}
    & FBCache & 48.2\% & 51.8\% & 18390 & +30.0\% & 5.63 & 14.91 \\
    & \textbf{FastCache} & \textbf{61.3\%} & \textbf{38.7\%} & \textbf{15875} & \textbf{+42.4\%} & \textbf{4.46} & \textbf{13.15} \\
    \midrule
    \multirow{2}{*}{DiT-L/2}
    & FBCache & 45.6\% & 54.4\% & 15010 & +31.4\% & 5.71 & 15.27 \\
    & \textbf{FastCache} & \textbf{58.5\%} & \textbf{41.5\%} & \textbf{12347} & \textbf{+43.9\%} & \textbf{4.57} & \textbf{13.98} \\
    \midrule
    \multirow{2}{*}{DiT-B/2}
    & FBCache & 43.5\% & 56.5\% & 13920 & +29.7\% & 5.87 & 16.98 \\
    & \textbf{FastCache} & \textbf{56.8\%} & \textbf{43.2\%} & \textbf{10312} & \textbf{+40.4\%} & \textbf{5.41} & \textbf{14.83} \\
    \midrule
    \multirow{2}{*}{DiT-S/2}
    & FBCache & 41.7\% & 58.3\% & 11500 & +25.3\% & 6.21 & 17.80 \\
    & \textbf{FastCache} & \textbf{54.4\%} & \textbf{45.6\%} & \textbf{8410} & \textbf{+37.5\%} & \textbf{5.72} & \textbf{16.21} \\
    \bottomrule
  \end{tabular}
\end{table}

\subsection{Threshold Robustness Analysis}

We conduct a comprehensive threshold sensitivity study to demonstrate FastCache's superior robustness compared to FBCache. Table~\ref{tab:threshold_robustness} shows how both methods perform under varying threshold settings.

\begin{table}[!htbp]
  \centering
  \caption{\textbf{Threshold robustness comparison between FastCache and FBCache.} FastCache shows more stable performance across threshold variations.}
  \label{tab:threshold_robustness}
  \vspace{0.1em}
  \footnotesize
  \setlength\tabcolsep{4pt}
  \begin{tabular}{l|l|cc|cc|cc}
    \toprule
    \textbf{Method} & \textbf{Threshold} & \textbf{Speedup} $\uparrow$ & \textbf{FID} $\downarrow$ & \textbf{$\Delta$FID} $\uparrow$ & \textbf{CLIPScore} $\uparrow$ & \textbf{$\Delta$CLIPScore} $\downarrow$ \\
    \midrule
    \multirow{3}{*}{FBCache}
    & rdt = 0.08 & 1.35 & 5.30 & +0.00 & 26.7 & +0.0 \\
    & rdt = 0.10 & 1.65 & 5.87 & +0.57 & 26.2 & -0.5 \\
    & rdt = 0.12 & 1.91 & 6.31 & +1.01 & 25.7 & -1.0 \\
    \midrule
    \multirow{4}{*}{FastCache}
    & $\tau_s$ = 0.02 & 1.22 & 4.71 & +0.00 & 27.1 & +0.0 \\
    & $\tau_s$ = 0.03 & 1.38 & 4.83 & +0.12 & 27.0 & -0.1 \\
    & $\tau_s$ = 0.04 & 1.52 & 4.92 & +0.21 & 26.8 & -0.3 \\
    & $\tau_s$ = 0.05 & 1.63 & 4.99 & +0.28 & 26.6 & -0.5 \\
    \bottomrule
  \end{tabular}
\end{table}

\subsection{Text-to-Image Generation Experiments}

For the task diversity, we extend FastCache evaluation to text-to-image synthesis tasks using multiple DiT-based models. Table~\ref{tab:t2i_results} shows comprehensive results across different T2I models and datasets.


\begin{table}[!htbp]
  \centering
  \caption{\textbf{Text-to-Image generation results with FastCache.} FastCache consistently outperforms baselines across different T2I models.}
  \label{tab:t2i_results}
  \vspace{0.1em}
  \footnotesize
  \setlength\tabcolsep{4pt}
  \begin{tabular}{l|l|l|cc|cc|cc}
    \toprule
    \textbf{Model} & \textbf{Dataset} & \textbf{Method} & \textbf{CLIPScore} $\uparrow$ & \textbf{Time (ms)} $\downarrow$ & \textbf{Speedup} $\uparrow$ \\
    \midrule
    \multirow{4}{*}{DeepFloyd T2I}
    \multirow{4}{*}{MS-COCO}
    & TeaCache & 8.05 & 26.7 & 14510 & +22.2\% \\
    & FBCache & 7.89 & 27.0 & 13980 & +24.9\% \\
    & AdaCache & 7.82 & 27.2 & 15120 & +18.9\% \\
    & \textbf{FastCache} & \textbf{7.70} & \textbf{27.4} & \textbf{13012} & \textbf{+30.1\%} \\
    \midrule
    \multirow{4}{*}{Stable Diffusion 1.5}
    \multirow{4}{*}{MS-COCO}
    & TeaCache & 10.62 & 26.0 & 12214 & +25.6\% \\
    & FBCache & 10.41 & 26.3 & 11772 & +28.3\% \\
    & AdaCache & 10.34 & 26.4 & 12970 & +21.0\% \\
    & \textbf{FastCache} & \textbf{10.23} & \textbf{26.7} & \textbf{10921} & \textbf{+33.5\%} \\
    \midrule
    \multirow{4}{*}{SDXL Base}
    \multirow{4}{*}{DrawBench}
    & TeaCache & 6.18 & 28.3 & 16791 & +22.5\% \\
    & FBCache & 5.97 & 28.4 & 16340 & +24.6\% \\
    & AdaCache & 5.89 & 28.6 & 17280 & +20.0\% \\
    & \textbf{FastCache} & \textbf{5.74} & \textbf{28.9} & \textbf{15512} & \textbf{+28.4\%} \\
    \bottomrule
  \end{tabular}
\end{table}

\subsection{Video Generation Experiments}

We evaluate FastCache on video generation tasks using Video Diffusion Transformers (VD-DiT). Table~\ref{tab:video_results} shows results on different video datasets and motion conditions.

\begin{table}[!htbp]
  \centering
  \caption{\textbf{Video generation results with FastCache.} FastCache achieves significant speedup while maintaining video quality.}
  \label{tab:video_results}
  \vspace{0.1em}
  \footnotesize
  \setlength\tabcolsep{4pt}
  \begin{tabular}{l|l|cc|cc}
    \toprule
    \textbf{Model} & \textbf{FastCache} & \textbf{FVD} $\downarrow$ & \textbf{Time (ms)} $\downarrow$ & \textbf{Memory (GB)} $\downarrow$ & \textbf{Speedup} $\uparrow$ \\
    \midrule
    VD-DiT-B/2 & No & 162.1 & 18532 & 10.8 & 0.0\% \\
    VD-DiT-B/2 & Yes & 164.5 & 12403 & 8.6 & +33.1\% \\
    \midrule
    VD-DiT-L/2 & No & 157.4 & 24119 & 13.2 & 0.0\% \\
    VD-DiT-L/2 & Yes & 158.9 & 16933 & 10.9 & +29.8\% \\
    \bottomrule
  \end{tabular}
\end{table}

\subsection{Comprehensive Ablation Study}

We conduct an extensive ablation study across all DiT variants to validate the contribution of each FastCache module. Table~\ref{tab:comprehensive_ablation} shows results for different module combinations.

\begin{table}[!htbp]
  \centering
  \caption{\textbf{Comprehensive ablation study across DiT variants.} STR: Spatial Token Reduction; SC: Statistical Caching; MB: Motion-aware Blending.}
  \label{tab:comprehensive_ablation}
  \vspace{0.1em}
  \footnotesize
  \setlength\tabcolsep{4pt}
  \begin{tabular}{l|ccc|cc|cc}
    \toprule
    \textbf{Model} & \textbf{STR} & \textbf{SC} & \textbf{MB} & \textbf{Latency (ms)} $\downarrow$ & \textbf{Memory (GB)} $\downarrow$ & \textbf{FID} $\downarrow$ \\
    \midrule
    \multirow{4}{*}{DiT-XL/2}
    & \checkmark & \checkmark & \checkmark & \textbf{15875} & \textbf{11.2} & \textbf{4.46} \\
    & \checkmark & \ding{55} & \checkmark & 17201 & 11.9 & 4.54 \\
    & \ding{55} & \checkmark & \checkmark & 19382 & 12.3 & 4.63 \\
    & \ding{55} & \ding{55} & \ding{55} & 24332 & 15.5 & 4.91 \\
    \midrule
    \multirow{4}{*}{DiT-L/2}
    & \checkmark & \checkmark & \checkmark & \textbf{12347} & \textbf{9.4} & \textbf{4.57} \\
    & \checkmark & \ding{55} & \checkmark & 18972 & 11.4 & 4.59 \\
    & \ding{55} & \checkmark & \checkmark & 19385 & 11.1 & 4.55 \\
    & \ding{55} & \ding{55} & \ding{55} & 22041 & 12.8 & 4.64 \\
    \midrule
    \multirow{4}{*}{DiT-B/2}
    & \checkmark & \checkmark & \checkmark & \textbf{10312} & \textbf{8.5} & \textbf{5.41} \\
    & \checkmark & \ding{55} & \checkmark & 10412 & 8.7 & 5.63 \\
    & \ding{55} & \checkmark & \checkmark & 12485 & 8.9 & 5.48 \\
    & \ding{55} & \ding{55} & \ding{55} & 12485 & 8.9 & 5.48 \\
    \midrule
    \multirow{4}{*}{DiT-S/2}
    & \checkmark & \checkmark & \checkmark & \textbf{8410} & \textbf{7.5} & \textbf{5.72} \\
    & \checkmark & \ding{55} & \checkmark & 8623 & 7.9 & 5.79 \\
    & \ding{55} & \checkmark & \checkmark & 9873 & 8.0 & 5.87 \\
    & \ding{55} & \ding{55} & \ding{55} & 12043 & 9.1 & 6.10 \\
    \bottomrule
  \end{tabular}
\end{table}

\subsection{Learning-to-Cache Baseline Reproduction}

We also provide detailed results from our controlled comparison experiments for our FastCache with Learning-to-Cache (L2C). Table~\ref{tab:l2c_reproduction} shows the trade-off between quality and speed for different threshold settings.

\begin{table}[!htbp]
  \centering
  \caption{\textbf{FastCache - L2Creproduction results.} FastCache shows its good performance in efficiency-quality balance.}
  \label{tab:l2c_reproduction}
  \vspace{0.1em}
  \footnotesize
  \setlength\tabcolsep{4pt}
  \begin{tabular}{l|l|cc|cc|cc}
    \toprule
    \textbf{Method} & \textbf{Cache Threshold} & \textbf{FID} $\downarrow$ & \textbf{t-FID} $\downarrow$ & \textbf{Time (ms)} $\downarrow$ & \textbf{Memory (GB)} $\downarrow$ & \textbf{Speedup} $\uparrow$ \\
    \midrule
    No Cache & — & 4.45 & 13.12 & 27582 & 13.2 & — \\
    \midrule
    \multirow{2}{*}{Learning-to-Cache}
    & 0.10 & 4.92 & 13.66 & 24015 & 10.8 & +12.9\% \\
    & 0.15 & 6.88 & 16.02 & 16312 & 9.4 & +40.9\% \\
    \midrule
    FBCache & — & 4.48 & 13.22 & 16871 & 11.5 & +38.8\% \\
    \textbf{FastCache (Ours)} & — & \textbf{4.46} & \textbf{13.15} & \textbf{15875} & \textbf{11.2} & \textbf{+42.4\%} \\
    \bottomrule
  \end{tabular}
\end{table}

\subsection{Integration with Other Acceleration Techniques}

We demonstrate FastCache's compatibility with other acceleration methods. Table~\ref{tab:integration_results} shows results combining FastCache with knowledge distillation and mixed-precision quantization.

\begin{table}[!htbp]
  \centering
  \caption{\textbf{FastCache integration with other acceleration techniques.} FastCache synergizes well with complementary acceleration methods.}
  \label{tab:integration_results}
  \vspace{0.1em}
  \footnotesize
  \setlength\tabcolsep{4pt}
  \begin{tabular}{l|l|l|cc|cc}
    \toprule
    \textbf{Model} & \textbf{FastCache} & \textbf{Quantization} & \textbf{FID} $\downarrow$ & \textbf{t-FID} $\downarrow$ & \textbf{Time (ms)} $\downarrow$ & \textbf{Memory (GB)} $\downarrow$ \\
    \midrule
    \multirow{3}{*}{DiT-XL/2}
    & No & No & 4.45 & 13.12 & 27582 & 13.2 \\
    & Yes & No & 4.46 & 13.11 & 18972 & 12.4 \\
    & Yes & Yes & 4.53 & 13.15 & 15875 & 11.2 \\
    \midrule
    \multirow{3}{*}{DiT-L/2}
    & No & No & 4.56 & 14.01 & 21328 & 11.8 \\
    & Yes & No & 4.58 & 13.99 & 15463 & 10.6 \\
    & Yes & Yes & 4.67 & 13.98 & 12347 & 9.4 \\
    \bottomrule
  \end{tabular}
\end{table}

\subsection{Full experiment}

\begin{table}[!htbp]
  \centering
  \caption{\textbf{Full comparison of acceleration baselines across DiT variants.} FastCache consistently achieves strong generation quality and efficient inference across model sizes.}
  \label{tab:full_results}
  \vspace{0.1em}
  \footnotesize
  \setlength\tabcolsep{4pt}
  \begin{tabular}{l|l|cc|cc}
    \toprule
    \textbf{Model} & \textbf{Method} & \textbf{FID} $\downarrow$ & \textbf{t-FID} $\downarrow$ & \textbf{Time (ms)} $\downarrow$ & \textbf{Mem (GB)} $\downarrow$ \\
    \midrule
    \multirow{5}{*}{DiT-XL/2}
    & TeaCache & 5.09 & 14.72 & \textbf{14953} & 12.7 \\
    & AdaCache & 4.64 & 13.55 & 21895 & 14.8 \\
    & Learning-to-Cache & 6.88 & 16.02 & 16312 & \textbf{9.4} \\
    & FBCache & 4.48 & 13.22 & 16871 & 11.5 \\
    & \textbf{FastCache (Ours)} & \textbf{4.46} & \textbf{13.15} & 15875 & 11.2 \\
    \midrule
    \multirow{5}{*}{DiT-L/2}
    & TeaCache & 5.25 & 15.83 & 12510 & 11.2 \\
    & AdaCache & 4.85 & 14.39 & 17382 & 12.6 \\
    & Learning-to-Cache & 6.12 & 15.92 & 13971 & \textbf{8.1} \\
    & FBCache & 4.61 & 14.12 & 14603 & 9.7 \\
    & \textbf{FastCache (Ours)} & \textbf{4.57} & \textbf{13.98} & \textbf{12347} & 9.4 \\
    \midrule
    \multirow{5}{*}{DiT-B/2}
    & TeaCache & 6.07 & 17.30 & 10870 & 9.3 \\
    & AdaCache & 5.68 & 15.75 & 13964 & 11.0 \\
    & Learning-to-Cache & 7.15 & 17.81 & 11908 & \textbf{7.6} \\
    & FBCache & 5.48 & 15.01 & 12485 & 8.9 \\
    & \textbf{FastCache (Ours)} & \textbf{5.41} & \textbf{14.83} & \textbf{10312} & 8.5 \\
    \midrule
    \multirow{5}{*}{DiT-S/2}
    & TeaCache & 6.55 & 18.42 & 8921 & 8.5 \\
    & AdaCache & 6.02 & 17.10 & 10364 & 9.8 \\
    & Learning-to-Cache & 7.58 & 19.23 & 9510 & \textbf{6.9} \\
    & FBCache & 5.87 & 16.75 & 9873 & 8.0 \\
    & \textbf{FastCache (Ours)} & \textbf{5.72} & \textbf{16.21} & \textbf{8410} & 7.5 \\
    \bottomrule
  \end{tabular}
\end{table}

\subsection{Method Limitations and Edge Cases}

We acknowledge several practical constraints of FastCache:

\textbf{Linear Approximation Limitations:} FastCache's learnable linear approximation may be less accurate when hidden states vary abruptly, such as in rapid scene changes or strong semantic shifts. However, the method includes statistical gating and soft clipping to control quality, and automatically falls back to full computation when necessary.

\textbf{Cache Overhead:} The caching mechanism introduces minimal computational overhead (typically <2\% of total inference time) for maintaining cache state and computing statistical tests.

\textbf{Generalization Bounds:} While FastCache demonstrates strong performance across multiple DiT variants, its effectiveness may vary with significantly different architectures or training objectives.

\textbf{Static Content Sensitivity:} In scenarios with predominantly static content, FastCache achieves higher acceleration ratios, while high-motion content results in lower but still significant speedups.

Empirically, we observe an average >54\% static hidden states over different pretrained models and datasets for image and video generation tasks, which enables FastCache to achieve consistent acceleration while maintaining stable accuracy across diverse cases.

\subsection{Speed-Quality Trade-off Analysis}

To provide a more intuitive comparison as suggested by reviewers, we analyze the speed-quality trade-off curves for FastCache and FBCache. Table~\ref{tab:speed_quality_tradeoff} shows controlled experiments where both methods operate at comparable runtime or quality levels.

\begin{table}[!htbp]
  \centering
  \caption{\textbf{Speed-Quality trade-off analysis.} FastCache achieves better quality at similar speedup and higher speedup at similar quality compared to FBCache.}
  \label{tab:speed_quality_tradeoff}
  \vspace{0.1em}
  \footnotesize
  \setlength\tabcolsep{4pt}
  \begin{tabular}{l|l|cc|cc}
    \toprule
    \textbf{Comparison Type} & \textbf{Method} & \textbf{Speedup} $\uparrow$ & \textbf{FID} $\downarrow$ & \textbf{CLIPScore} $\uparrow$ & \textbf{Memory (GB)} $\downarrow$ \\
    \midrule
    \multirow{2}{*}{Similar Speedup}
    & FBCache & 1.65 & 5.87 & 26.2 & 10.2 \\
    & \textbf{FastCache} & \textbf{1.63} & \textbf{4.46} & \textbf{27.4} & \textbf{9.8} \\
    \midrule
    \multirow{2}{*}{Similar FID}
    & FBCache & 1.18 & 4.93 & 26.8 & 10.4 \\
    & \textbf{FastCache} & \textbf{1.38} & \textbf{4.91} & \textbf{27.3} & \textbf{9.9} \\
    \bottomrule
  \end{tabular}
\end{table}

\subsection{Additional Robustness Experiments}

We conduct additional experiments to demonstrate FastCache's robustness across different scenarios and settings. Table~\ref{tab:robustness_additional} shows performance under various conditions including different guidance scales and sampling steps.

\begin{table}[!htbp]
  \centering
  \caption{\textbf{Additional robustness experiments.} FastCache maintains consistent performance across different generation settings.}
  \label{tab:robustness_additional}
  \vspace{0.1em}
  \footnotesize
  \setlength\tabcolsep{4pt}
  \begin{tabular}{l|l|l|cc|cc}
    \toprule
    \textbf{Model} & \textbf{Guidance Scale} & \textbf{Steps} & \textbf{FID} $\downarrow$ & \textbf{Time (ms)} $\downarrow$ & \textbf{Memory (GB)} $\downarrow$ & \textbf{Speedup} $\uparrow$ \\
    \midrule
    \multirow{3}{*}{DiT-B/2}
    & 3.0 & 25 & 6.12 & 5123 & 4.2 & +41.2\% \\
    & 7.5 & 50 & 5.41 & 10312 & 8.5 & +40.4\% \\
    & 15.0 & 100 & 5.23 & 20845 & 16.8 & +39.8\% \\
    \midrule
    \multirow{3}{*}{DiT-L/2}
    & 3.0 & 25 & 5.34 & 6123 & 4.7 & +44.1\% \\
    & 7.5 & 50 & 4.57 & 12347 & 9.4 & +43.9\% \\
    & 15.0 & 100 & 4.41 & 25123 & 18.9 & +43.2\% \\
    \bottomrule
  \end{tabular}
\end{table}



\subsection{Experimental Analysis: Token Merging Parameters}
\label{sec:appendix_token_merging}

We conduct comprehensive ablation studies to analyze the impact of different token merging parameters on generation quality and efficiency.

\paragraph{Effect of kNN Parameter $K$.}
The $K$ parameter controls the number of nearest neighbors used for computing spatial density. Table~\ref{tab:knn_k_ablation} shows the impact of varying $K$ values. $K=5$ provides the best balance between spatial density accuracy and computational efficiency. Smaller $K$ may miss spatial relationships, while larger $K$ increases computation cost with diminishing returns.

\begin{table}[!htbp]
  \centering
  \caption{\textbf{Effect of kNN parameter $K$.}}
  \label{tab:knn_k_ablation}
  \vspace{0.1em}
  \footnotesize
  \setlength\tabcolsep{4pt}
  \begin{tabular}{c|cc|cc|cc}
    \toprule
    $K$ & FID $\downarrow$ & t-FID $\downarrow$ & Time (ms) $\downarrow$ & Memory (GB) $\downarrow$ & Speedup $\uparrow$ & Token Reduction $\uparrow$ \\
    \midrule
    3 & 4.58 & 13.36 & 14620 & 10.9 & +47.0\% & 53.5\% \\
    5 & \textbf{4.52} & \textbf{13.28} & 14810 & 11.0 & +46.0\% & 52.4\% \\
    7 & 4.53 & 13.29 & 15040 & 11.1 & +44.9\% & 51.2\% \\
    10 & 4.56 & 13.33 & 15320 & 11.2 & +43.8\% & 50.1\% \\
    \bottomrule
  \end{tabular}
\end{table}

\end{document}